\pgfplotsset{compat=1.14}   
    \pgfplotsset{
        cycle list/Dark2,
        cycle multiindex* list={
            mark list*\nextlist
            Dark2\nextlist
        },
    }
\newcommand{\R}{\mathbb{R}}
\newcommand{\E}{\mathbb{E}}
\newcommand{\T}{\top}
\def\*#1{\mathbf{#1}}
\renewenvironment{proof}{\begin{newproof}}{\end{newproof}\qed}
\newenvironment{psketch}{%
  \proof}{\endproof}
\def\<#1\>{\langle#1 \rangle}
\newtheorem{theorem}{Theorem}[section]
\newtheorem{lemma}[theorem]{Lemma}
\DeclareMathOperator*{\argmax}{argmax} 
\icmltitlerunning{Generalized Multi-view Shared Subspace Learning using View Bootstrapping}
\begin{document}

\twocolumn[
\icmltitle{Generalized Multi-view Shared Subspace Learning using View Bootstrapping}



\icmlsetsymbol{equal}{*}

\begin{icmlauthorlist}
\icmlauthor{Krishna Somandepalli}{to}
\icmlauthor{Shrikanth Narayanan}{to}
\end{icmlauthorlist}

\icmlaffiliation{to}{Department of Electrical Engineering, University of Southern California, USA}

\icmlcorrespondingauthor{Krishna Somandepalli}{somandep@usc.edu}

\icmlkeywords{Machine Learning, ICML}

\vskip 0.3in
]



\printAffiliationsAndNotice{}  

\begin{abstract}
A key objective in multi-view learning is to model the information common to multiple parallel views of a class of objects/events to improve downstream learning tasks. In this context, two open research questions remain: How can we model hundreds of views per event? Can we learn robust multi-view embeddings without any knowledge of how these views are acquired? We present a neural method based on multi-view correlation to capture the information shared across a large number of views by subsampling them in a view-agnostic manner during training. To provide an upper bound on the number of views to subsample for a given embedding dimension, we analyze the error of the bootstrapped multi-view correlation objective using matrix concentration theory. Our experiments on spoken word recognition, 3D object classification and pose-invariant face recognition demonstrate the robustness of view bootstrapping to model a large number of views. Results underscore the applicability of our method for a view-agnostic learning setting.
\end{abstract}

\vspace{-0.2in}

\section{Introduction}
\label{intro}
Across many application domains, we often rely on data collected from multiple views of a target object/event to learn a reliable and comprehensive representation. 
This group of (machine) learning problems is referred to as multi-view learning. 
A distinguishing feature of this paradigm is that the different views of a given instance share an association or a~\textit{correspondence} that can be exploited to build more informed models of the observed event \cite{xu2013survey}.
Much like the process by which humans learn by reconciling different views of information that may appear conflicting \cite{klemen2012current}, data from different views contain both contrasting and complementary knowledge that can be used to offer robust learning solutions.


We define a \textit{view} as data that is sampled from observing an object/event at different states or with different instruments to capture its various presentations. For example, a person's face photographed at different angles or audio, language and visuals in an movie. 
The objective of multi-view learning is to learn vector representations (embeddings/features) that are discriminative of the underlying events by explicitly factoring in/out the shared correspondence between the many views.
These embeddings can provide robust features for downstream tasks such as classification and clustering, e.g., text-to-image retrieval \cite{dorfer2018end} and bilingual word embeddings \cite{wang2015deep}. They can also be used in an unsupervised fashion to uncover the inherent structure in such data, e.g., learning common components from brain signals across individuals \cite{parra2018correlated}.

Multi-view learning solutions have explored various ways to model the \textit{correspondence} between multiple views to fuse the knowledge across them. They can be broadly categorized into (1) subspace alignment methods, (2) generative models and (3) fusion-based methods \cite{li2018survey}. The present work can be classified as subspace-alignment, which deals with learning projections between two or more views to maximize the similarity.
Most existing subspace-alignment methods learn multi-view representations by estimating at least one distinct projection matrix per view, often assuming that the view information for the probing sample is available at training/testing. 
Considering the sheer scale of multi-view problems--amount of data and number of views--two critical questions arise: how can we model hundreds of views of an event, and can we learn the multi-view representations effectively in a view-agnostic fashion?

In this paper, we build upon the work by Somandepalli et. al., \yrcite{somandepalli2019multiview,somandepalli2019arxiv} where a \textit{multi-view correlation} objective (mv-corr) was proposed to learn shared subspaces across multiple views. Data from different views is transformed using identical neural networks (\textit{sub-networks}) to obtain view-invariant embeddings discriminative of the underlying event.
We advance this framework along two directions: First, we explore view bootstrapping during training to be able to incorporate a large number of views. We provide a theoretical analysis for the bootstrapped mv-corr objective and derive an upper bound for the number of views to subsample with respect to the embedding dimension. This result is significant because it allows us to determine the number of sub-networks to use in the mv-corr framework.

Second, we conduct several experiments to benchmark the performance of view-bootstrapping for downstream learning tasks and highlight its applicability for modeling a large number of views in a view-agnostic fashion.
In practice, this framework only needs to know that the sample of views considered at each training iteration have a \textit{correspondence}. 
That is, the multiple views are obtained from observing the same underlying event. 
A natural example for this setting is audio recordings from multiple microphones distributed in a conference room. In this example, we can use the timestamps to construct a correspondence. 
This method can also be used for applications such as pose-invariant face recognition in a semi-supervised setting. We do not need the pose information (view-agnostic) or the total number of classes during training. All we need to know is that the different face images are of the same person. 

\section{Related Work}
\label{background}
\subsection{Subspace alignment for more than two views}
Widely used correlation-based methods include canonical correlation analysis (CCA) \cite{hotelling1992relations} and its deep learning versions \cite{andrew2013deep, dumpala2018sentiment} that can learn non-linear transformations of the two views to derive maximally correlated subspaces. Several metric-learning based methods were proposed to extend CCA for multiple views by learning a view-specific or view-invariant feature space by transforming data. For example, generalized CCA \cite{horst1961generalized,benton2017deep} and multi-view CCA \cite{chaudhuri2009multi}. Their applications include audio-visual speaker clustering and phoneme classification from speech and articulatory information.

In a supervised setting, a discriminative multi-view subspace can be obtained by treating labels as an additional view. 
Prominent examples of this idea include generalized multi-view analysis (GMA, \citealt{sharma2012generalized}), partial least squares regression based methods \cite{cai2013regularized} and multi-view discriminant analysis (MvDA, \citet{kan2015multi}). They were effectively used for applications such as image captioning and pose-invariant face recognition. However the generalizability of these methods to hundreds of views remains to be explored. 
\subsection{View-agnostic multi-view learning}
The subspace methods discussed thus far assume that the view information is readily available during training and testing. 
For instance, GMA and MvDA estimate a within-class scatter matrix specific to each view. 
In practice, view information may not be available for the probe data (e.g., pose of a face during testing). 
A promising direction to address this problem was proposed by Ding and Fu \yrcite{ding2014low,ding2017robust}. To eliminate the need for view information of the probe sample, a low-rank subspace representation was used to bridge the view-specific and view-invariant representations. Here, a single projection matrix per view was used which would scale linearly with increasing number of views.
\subsection{Domain adaptation in a multi-view paradigm}
A recent survey by \citet{ding2018robust} presents a unified learning framework mapping out the similarities between multi-view learning and domain adaptation. 
Typical domain adaptation methods seek domain-invariant representations which are akin to view-invariant representations if we treat different domains as views. 
The benefit of the multi-view paradigm in this context is that the variabilities associated with multiple views can be \textit{washed out} to obtain discriminative representations of the underlying classes. 

This formulation is particularly useful in the domain of speech/audio processing for applications such as wake-word recognition \cite{kepuska2009novel}. Here we need to recognize a keyword (e.g., ``Alexa'', ``OK Google'', ``Siri'') no matter who says it or where it is said (i.e., the specific background acoustic conditions). Speaker verification methods based on joint factor analysis \cite{dehak2009support} and total variability modeling \cite{dehak2011front} have explored the ideas of factoring out the speaker-dependent factors and speaker-independent factors to obtain robust speaker representations in the context of domain adaption. 
Recently, \citet{somandepalli2019multiview} showed that a more robust speech representation can be obtained by explicitly modeling multiple utterances of a word as corresponding views. 

\subsection{Views vs. Modalities}\label{sec:spiel}

Following ideas proposed in the review by \citet{ding2018robust}, we delineate two kinds of allied but distinct learning problems: \textit{multi-view} and \textit{multi-modal}. 
In related work of this domain (See surveys by \citealt{zhao207multi,ding2018robust,li2018survey}), the two terms are used interchangeably. We however distinguish the two concepts to facilitate modeling and analysis. 
Multiple views of an event can be modeled as samples drawn from identically distributed random processes, e.g., a person's face at different poses. 
However, the individual modalities in multi-modal data need not arise from identically distributed processes, e.g., person's identity from their voice, speech and pose. 

In this work, we focus on multi-view problems, specifically to learn embeddings that capture the shared information across the views. The premise that multiple views can be modeled as samples from identically distributed processes not only facilitates the theoretical analysis of the mv-corr objective, but also helps us to formulate domain adaptation problems in a multi-view paradigm; particularly, for applications that need to scale for hundreds of views (e.g., speaker-invariant word recognition).
While it should be noted that the methods explored in this work may not be directly applied to multi-modal problems where we are generally interested to capture both modality-specific and modality-shared representations, the theory developed in this work can be extended to other methods such as GMA \cite{sharma2012generalized} and multi-view deep network \cite{kan2016multi} for the broader class of multi-modal problems.

\begin{figure*}[t!]
\vskip 0.2in
\begin{center}
\scalebox{0.9}{\centerline{\includegraphics[width=2\columnwidth]{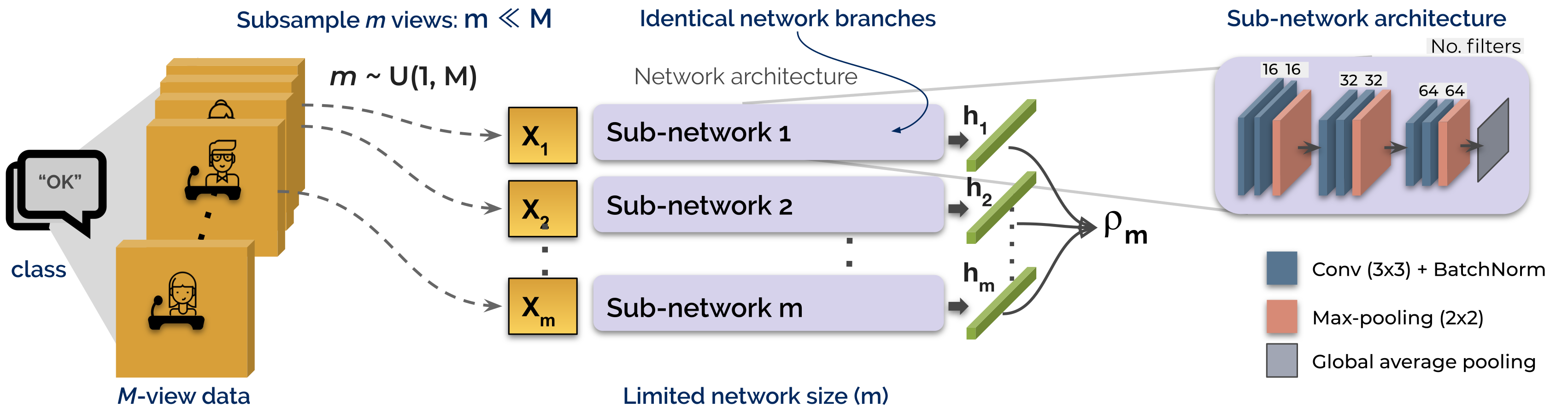}}}
\caption{Schematic of view bootstrapping for multi-view shared subspace learning. Inset: example sub-network architecture}
\label{fig:schematic}
\end{center}
\vskip -0.2in
\end{figure*}





\section{Proposed Approach}
\label{methods}
We first review the multi-view correlation (mv-corr) objective developed by Somandepalli et. al.,~\yrcite{somandepalli2019multiview,somandepalli2019arxiv}. Next, we consider practical aspects for using this objective in a deep learning framework followed by view-bootstrapping. 
Then, we develop a theoretical analysis to understand the error of the bootstrapped mv-corr objective.

\subsection{Multi-view correlation (mv-corr)}
Consider $N$ samples of $d$-dimensional features sampled by observing an object/event from $M$ different views. 
Let $\*X_l \in \mathbb{R}^{d\times N}: l=1,..., M$, be the data matrix for the $l$-th view with columns as mean-zero features. 
We can use the same feature dimension $d$ across all views because we assume that that the multiple views are sampled from identical distributions (See Sec.~\ref{sec:spiel}). 
We describe the mv-corr objective in the context of CCA. 
The premise of applying CCA-like approaches to multi-view learning is that the \textit{inherent variability associated with a semantic class is uncorrelated across multiple views} to represent the signal shared across the views.
For $M=2$, CCA finds projections of same dimensions $\*v_1\text{ and }\*v_2$ in the direction that maximizes the correlation between them. Formally,
\begin{eqnarray}\label{eqn:cca}
    (\*v_1^*, \*v_2^*)
    = \argmax_{\*v_1,\*v_2 \in \R^{d}}\frac{\*v_1^{\T}\*{\Sigma}_{12}\*v_2}{\sqrt{\*v_1^{\T}\*\Sigma_{11}\*v_1\*v_2^{\T}\*\Sigma_{22}\*v_2}}
\end{eqnarray}
where $\*\Sigma_{12}$ is the cross-covariance and $\*\Sigma_{11}\,,\*\Sigma_{22}$ are the covariance terms for the two views. 
To extend the CCA formulation for more than two views, we consider the sum of all pairwise covariance terms. 
That is, find a projection matrix or a \textit{multi-view shared subspace} $\*W\in\R^{k\times d}$ that maximizes the ratio of the sum of between-view over within-view covariances in the projected space:
\begin{eqnarray}\label{eqn:mcca}
    \*W^*
    = \argmax_{\*W}\frac{\*W^{\T}\big(\*X_1\*X_2^{\T} +\dotsc+\*X_{M-1}\*X_{M}^{\T}\big)\*W}{\*W^{\T}\big(\*X_1\*X_1^{\T} +\dotsc+\*X_{M}\*X_{M}^{\T}\big)\*W}
\end{eqnarray}
We refer to the numerator and denominator covariance sums in Eq.~\ref{eqn:mcca} as \textit{between-view covariance} $\*R_b$ and \textit{within-view covariance} $\*R_w$ which are sums of $M(M-1)$ and $M$ covariance terms, respectively. Because we assume feature columns in $\*X_l$ to be mean-zero, we estimate the covariance matrices as a cross product without loss of generality.

We now define a multi-view correlation $\*\Lambda$ as the normalized ratio of between- and within-view covariance matrix:
\begin{eqnarray}\label{eqn:mv-corr}
    \*\Lambda = \max_{\*W}\frac{1}{M-1}\frac{\*W^{\T}\*R_b\*W}{\*W^{\T}\*R_w\*W}
\end{eqnarray}
here, the common scaling factor $M(N-1)$ in the covariance estimates are omitted from the ratio. 

A version of this \textit{ratio of covariances} has been considered in several related multi-view learning methods. One of the earliest works by \citet{hotelling1992relations} presented a similar formulation for scalars, also referred to as multi-set CCA by some works (e.g., \citealt{parra2018correlated}). Notice that this ratio is similar to the use of between-class and within-class scatter matrices in linear discriminant analysis (LDA, \citealt{fisher1936use}) and more recently in multi-view methods such as GMA and MvDA. Another version of this ratio known as the intraclass correlation coefficient \cite{bartko1966intraclass} has been extensively used to quantify test-retest repeatability of clinical measures (e.g., \citealt{somandepalli2015short}).

The primary difference of mv-corr formulation from these methods is that it does not consider the class information explicitly while estimating the covariance matrices. All we need to know is that the subset of $M$ views \textit{correspond} to the same object/event. Additionally we consider the sum of covariances for all pairs of views, eliminating the need for view-specific transformation which enables us to learn the shared subspace $\*W$ in a view-agnostic manner. 
On the downside, we only capture the shared representation across multiple views and discard view-specific information which may be of interest for some multi-modal applications.

\subsection{Implementation and practical considerations}
Using ideas similar to the deep variants of CCA \cite{andrew2013deep} and LDA \cite{dorfer2015deep}, we can use deep neural networks (DNN) to learn non-linear transformations of the multi-view data to obtain (possibly) low-dimensional representations.
In Eq.~\ref{eqn:mv-corr}, the solution $\*W$ jointly diagonalizes the two covariances $\*R_b$ and $\*R_w$ because $\*W$ is their common eigenspace. Thus, we use the trace ($\Tr$) form of Eq.~\ref{eqn:mv-corr} to fashion a loss function, $\rho_M$ for batch optimization in DNN for data from $M$ views.
\begin{align}\label{eqn:rho-M}
    \rho_{M}
    = \max_{\*W}\frac{1}{d(M-1)}\frac{\Tr(\*W^{\T}\*R_b\*W)}{\Tr(\*W^{\T}\*R_w\*W)} 
\end{align}

The DNN framework for mv-corr consists of one network per view~$l$, referred to as $l^{\text{th}}$ \textit{sub-network} denoted by~$f_l$.
The architecture of the sub-network is the same for multiple views and the weights are not shared across the sub-networks for any layer.
The output from the top-most layer of each sub-network is passed to a fully-connected layer of $d$ neurons. 
Let $\*H_l=f_l(\*X_l)\in\R^{d\times N}$ be the activations from this last layer where $N$ is now the batch size. 
Thus, for each batch we estimate the between- and within-view covariances $\*R_b$ and $\*R_w$ using $\*H_l\,,l=1,\dotsc,M$ to compute the loss in Eq.~\ref{eqn:rho-M}. The subspace $\*W$ is obtained by solving the generalized eigenvalue (GEV) problem  using Cholesky decomposition.

\textbf{Total view covariance}: For a large number of views $M$, estimating $\*R_b$ in each batch is expensive as it is $\mathcal{O}(M^2)$. We instead compute a total-view covariance term $\*R_t$ which only involves estimating a single covariance for the sum of all views and is $\mathcal{O}(M)$, and then estimate $\*R_b=\*R_t-\*R_w$.
See Supplementary (Suppl.) methods S1 for the proof.
\begin{eqnarray}\label{eqn:rt}
    \*R_t = \*R_b + \*R_w = \frac{1}{M}\bigg(\sum_{l=1}^{M}\*X_l\bigg)\bigg(\sum_{l=1}^{M}\*X_l\bigg)^{\T}
\end{eqnarray}
\newline\textbf{Choosing batch size}: A sample size of $\mathcal{O}(d\log d)$ is sufficient to approximate the sample covariance matrix of a general distribution in $\R^d$ \cite{vershynin2010introduction}. Thus we choose a batch size of $N=\text{ceil}(d\log d)$ for a network with $d$-dimensional embeddings. In our experiments, choosing $N<d\log d$ was detrimental to model convergence.

\textbf{Regularize $\*R_w$}: Maximizing $\rho_M$ (Eq.~\ref{eqn:rho-M}) corresponds to maximizing the mean of eigenvalues of $\*R_w^{-1}\*R_b$. Estimating $\*R_w$ with rank deficient $\*H_l$ may lead to spuriously high $\rho$. One solution is to truncate the eigenspace $\*W$. However, this will reduce the number of directions of separability in the data. To retain the full dimensionality of the covariance matrix, we use ``shrinkage" regularization \cite{ledoit2000well} for $\*R_w$ with a parameter $\nu=0.2$ and normalized trace parameter $\bar{\lambda}=\Tr(\*R_w)$ as $\Tilde{\*R}_w = (1-\nu)\*R_w + \nu\bar{\lambda}\*I_d/d$

\textbf{Loss function is bounded}: The objective $\rho_M$ is the average of $d$ eigenvalues obtained by solving GEV. We can analytically show that this objective is bounded above by 1 (See Suppl. methods~S2). Thus, during training, we minimize the loss $1-\rho_M$ to avoid trivial solutions.

\textbf{Inference}: Maximizing $\rho$ leads to maximally correlated embeddings. Thus, during inference we only need to extract embeddings from one of the sub-networks. 
The proposed loss ensures that the different embeddings are maximally correlated (See Suppl. methods simulations~S3).

\subsection{View bootstrapping}
Modeling a large number of views would require many sub-networks which is not practical for hundreds of views. 
To address this issue, we propose view bootstrapping. The schematic of the overall method is shown in Figure~\ref{fig:schematic}.
Here, we construct a network with $m$ sub-networks and sample with replacement a small number of views $m\ll M$ to model data with $M$ views. 
During training, we do not keep track of views being sampled for specific sub-networks which ensures that the model is view-agnostic.
The bootstrapped objective can be written as:
\begin{equation}\label{eq:stochastic}
    \rho^{*} = \mathbb{E}_{m\sim\mathcal{U}(1,M)}\rho_m \approx \rho_M
\end{equation}
The intuition behind our stochastic extension lies in law of large numbers applied to the covariance matrices in Eq.~\ref{eqn:rho-M}.
Let $\*R_{\{b,w\}}$ now denote the covariances estimated from $m$ views. Asymptotically, with a large~$M$ and as~$m\rightarrow M$, we have $\E{\*R_b^{(m)}}\rightarrow \*\Sigma_b$ and $\E{\*R_w^{(m)}}\rightarrow\*\Sigma_w$ where  $\*\Sigma_b\text{ and }\*\Sigma_w$ are the between- and within-view covariance estimated for all $M$ views.
In practice, the number of available view samples is finite and the total number of views possible is often unknown. 
Thus, we analyze the error of the estimate $\rho_m$ with respect to $\rho^*=d^{-1}\Tr(\*W^{\T}\*\Sigma_b\*W)/\Tr(\*W^{\T}\*\Sigma_w\*W)$ in a non-asymptotic setting. 
\begin{theorem}{
Let $\*X=[\*A^{(1)},\dotsc,\*A^{(N)}]$ be the $m\times d$ matrices of $m$ views sampled from an unknown number of views $M$. Let the rows $\*A_l$ of the view matrices $\*A$ be independent subgaussian vectors in $\R^d$ with $\norm{\*A_l}_2=1:l=1,\dotsc,m$. Then for  any $t\geq0$, with probability at least $1-2\exp({-ct^2})$, we have
\begin{flalign*}
{\rho_m}\leq \max\bigg(1, C\frac{m^2}{(\sqrt{d}+t)^2}\rho^{*}\bigg)
\end{flalign*}
\textit{Here, $\rho_m$ and $\rho^*$ are the mv-corr objectives for subsampled views $m$ and the total number of views $M$ respectively. The constant $C$ depends only on the subgaussian norm $K$ of the view space, with $K = \displaystyle\max_{i,l}\norm{\*A_l^{(i)}}_{\psi_2}$}
}\end{theorem}
\begin{psketch}
Here we highlight the main elements of the proof. 
Please see Suppl. methods, Theorem~S6 for the detailed work.
Recall that $\*R_b$ and $\*R_w$ now denote covariance matrices for $m$ views.
Using properties of trace and spectral norm, we can rewrite the expression of the corresponding $\rho_m$ as: 
\begin{align*}
    \rho_m = \frac{\Tr(\*W^{\T}{\*R}_b\*W)}{\Tr(\*W^{\T}{\*R}_w\*W)} 
    = \frac{\<{\*R_b} + \*{\Sigma}_b - \*{\Sigma}_b, \*W\*W^{\T}\>}{\<{\*R}_w + \*{\Sigma}_w - \*{\Sigma}_w, \*W\*W^{\T}\>}\\
    \leq \frac{\<{\*{\Sigma}}_b, \*W\*W^{\T}\> + \norm{{\*R}_t - \*{\Sigma}_t} + \norm{{\*R}_w - \*{\Sigma}_w}}{\<{\*{\Sigma}}_w, \*W\*W^{\T}\> - \norm{{\*R}_w - \*{\Sigma}_w}}
\end{align*}
where $\*{\Sigma}_b$ and $\*{\Sigma}_w$ are the previously defined between- and within-view covariances respectively for $M$ views. 
From Eq.~\ref{eqn:rt}, recall the result: $\*R_b = \*R_t-\*R_w$. The rest follows through triangular inequalities.
Observe that the ratio $\<{\*{\Sigma}}_B, \*W\*W^{\T}\>/\<{\*{\Sigma}}_W, \*W\*W^{\T}\>$ is the optimal $\rho^*$ estimated from the unknown number of views $M$. 
Also, the two trace terms are sum of normalized eigenvalues. Thus $\abs{\<{\*{\Sigma}}_b, \*W\*W^{\T}\>},\abs{\<{\*{\Sigma}}_W, \*W\*W^{\T}\>}\in[1,d]$. 

Next, we need to bound the two norms $\delta_t = \norm{\*R_t-\*\Sigma_t}$ and $\delta_w=\norm{\*R_w-\*\Sigma_w}$. In the statement of the theorem, note that the multi-view data matrix $\*X$ was rearranged as  $[\*A^{(1)},\dotsc,\*A^{(N)}]$ using the features as rows in the view-matrices $\*A$. Thus, using the identicality assumption of multiple views, we have:
{\begin{align*}
\delta_w 
&= \norm{\sum_{i=1}^{N}\frac{1}{m}\*A^{(i)\T}\*A^{(i)} - \E\*A^{(i)\T}\*A^{(i)}} \\\nonumber 
&\leq \sum_{i=1}^{N}\norm{\frac{1}{m}\*A^{(i)\T}\*A^{(i)} - \*{\Sigma}_w^{(i)}}
\leq N\norm{\frac{1}{m}\*A^{\top}\*A - \*\Sigma_w}
\end{align*}}

The term $\norm{\frac{1}{m}\*A^{\T}\*A-\*\Sigma_w}$ has been extensively studied for the case of isotropic distributions i.e., $\*\Sigma_w=\*I$ by \citet{vershynin2010introduction}. Here, we obtain a bound for the general case of $\*\Sigma_w$ and show that $\delta_w=\norm{\*R_w-\*\Sigma_w}$ is $\mathcal{O}(d/m)$.
Similarly, we can show that $\delta_t=\norm{\*R_t-\*\Sigma_t}\leq m$. The intuition here is that $\*R_t$ is sum of $m$ view vectors, hence it is $\mathcal{O}(m)$.
Detailed proofs for $\delta_w$ and $\delta_t$ are provided in Suppl. methods, Lemmas~S4 and S5.
Using these results and the fact that we always choose an embedding dimension $d$ greater than $m$, we can prove that $\rho_m$ is $\mathcal{O}({m^2}/{d})$.
\end{psketch}

This result is significant because we can now show that, to obtain $d$-dimensional multi-view embeddings, we only need to subsample $m\leq\sqrt{d}$ number of views from the larger set of views. For example, for a 64-dimensional embedding, we would need to sample at most $8$ views. In other words, the DNN architecture in this case would have $8$ sub-networks. Additionally, the choice of $d$ is important because a small $d$ would only discriminate between classes that are already easily separable in the data. In contrast, a larger $d$ would require a greater $m$ which in turn inflates the number of parameters in the DNN. 

\section{Experiments}
We conducted experiments with three different datasets to benchmark the performance of our method with respect to the competitive baselines specific to these domains.
We chose these datasets to assess the applicability of our method for downstream learning tasks in two distinct multi-class semi-supervised settings: (1) uniform distribution of views per class and (2) variable number of views per class.

\subsection{3D object classification}\label{sec:3d-object}
We use Princeton ModelNet dataset \cite{wu20153d} to classify the object type from 2D images acquired at multiple view points.
We use the train/test splits for the 40-class subset provided in their website\footnote{3D object dataset and leader-board:\hyperlink{https://modelnet.cs.princeton.edu/}{modelnet.cs.princeton.edu}\label{note1}}.
Each class has $100$ CAD models ($80$/$20$ for train/test) with 2D images ($100\times100$px) rendered in two settings by \citet{su2015multi}: \textbf{V-12}: $12$ views by placing virtual cameras at $30$ degree intervals around the consistent upright position of an object and \textbf{V-80}: $80$ views rendered by placing $20$ cameras pointed towards the object centroid and rotating at $0, 90,
180, 270$ degrees along the axis passing through the
camera and the object centroid.
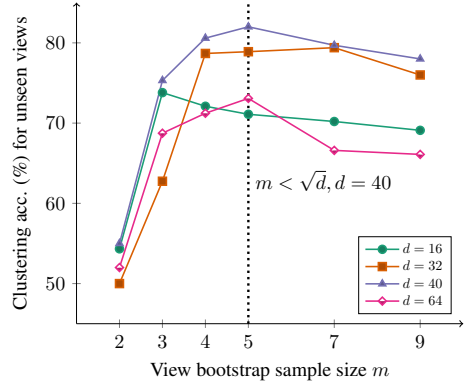
\begin{figure}[t!]
\vskip 0.05in
\begin{center}
\scalebox{0.75}{
\begin{tikzpicture}
\begin{axis}[
     legend style={nodes={scale=0.7, transform shape}}, 
     axis lines=middle,
     axis line style={->},
     x label style={at={(axis description cs:.5,-0.1)},anchor=north},
     y label style={at={(axis description cs:-0.1,.5)},rotate=90,anchor=south},
     xlabel={View bootstrap sample size $m$},
     ylabel={Clustering acc. (\%) for unseen views },
     xmin= 1, xmax= 10,
     ymin= 45, ymax= 85,
     xtick={2,3,4,5,7,9},
     ytick={50, 60, 70, 80, 90},
     legend pos=south east,
     every axis plot/.append style={thick}
]
\addplot
     coordinates {(2, 54.331)(3, 73.800)(4, 72.100)(5, 71.100)(7, 70.200)(9, 69.100)};
\addplot
     coordinates {(2, 50.000)(3, 62.755)(4, 78.680)(5, 78.900)(7, 79.400)(9, 76.000)};
\addplot
     coordinates {(2, 55.000)(3, 75.313)(4, 80.590)(5, 82.000)(7, 79.700)(9, 78.000)};
\addplot
     coordinates {(2, 52.000)(3, 68.741)(4, 71.230)(5, 73.080)(7, 66.600)(9, 66.100)};
\draw [very thick,dotted] (5,\pgfkeysvalueof{/pgfplots/ymin}) -- (5,\pgfkeysvalueof{/pgfplots/ymax});
\node at (axis cs:5,65) [anchor=north west] {$m<\sqrt{d},d=40$};
    \legend{$d=16$,$d=32$,$d=40$,$d=64$};
\end{axis}
\end{tikzpicture}

}
\caption{Clustering accuracy of unseen views for different choices of embedding-dimension $d$ and number of views subsampled $m$}
\label{fig:3d-acc}
\end{center}
\vskip -0.3in
\end{figure}
\subsubsection{Deep mv-corr Model}\label{sec:model}
As shown in Figure~\ref{fig:schematic}, we use identical sub-networks to model the data from each view. 
The number of sub-networks is equal to the number of views subsampled $m$. We use a simple 3-block VGG architecture \cite{chatfield2014return} as illustrated in the inset in Figure~\ref{fig:schematic}. 
To reduce the number of trainable parameters, we use global average pooling after the last layer instead of vectorizing its activations before passing them to a dense layer of $d$ neurons. The embedding layer is constrained to have a unit $l_2$ norm. For all our experiments, we observed that a sigmoid activation for all layers yielded maximum $\rho$ at convergence. 
The loss $1-\rho$ was minimized using SGD with a learning rate of $0.01$, momentum of $0.9$ and a decay of $1e-6$. 
To determine model convergence, we applied early stopping criteria (stop training if $1-\rho$ at the end of a training epoch did not decrease by $10^{-3}$ for $5$ consecutive epochs).
All models were implemented in TensorFlow\footnote{TensorFlow 2.0: \hyperlink{https://www.tensorflow.org/api/r2.0}{tensorflow.org/api/r2.0}} and trained on a GeForce GTX 1080 Ti GPU.

The result in Theorem 3.1 only tells us about the relation between $d$ and $m$ and not their effect on classification accuracy, so we trained models for $m=[2,3,4,5,7,9]$ and $d=[16,32,40,64]$. Note that, during training we only need to know that the $m$ samples per instance in a batch are of the same class, hence the training can be considered semi-supervised. 
During inference, we just extract embeddings from one of the sub-networks which is randomly chosen. We did not observe significant changes in performance by choosing a different sub-network.
\begin{table}[t]
\caption{Accuracy of clustering for seen and unseen views. SD computed from ten trials. Bold indicates significantly higher acc.}
\label{table:3d-object}
\vskip 0.15in
\begin{center}
\begin{small}
\begin{sc}
\begin{tabular}{lcccr}
\toprule
\multicolumn{2}{l}{dataset/model} & Ours           & supervised     \\
\midrule
\multirow{2}{*}{\textbf{V-12}}   & seen    & 82.9 $\pm$ 0.5 & \textbf{88.7} $\pm$ \textbf{1.2} \\
                        & unseen  & 82.1 $\pm$ 0.7 & 81.5 $\pm$ 0.9  \\
\midrule
\multirow{2}{*}{\textbf{V-80}}   & seen    & 84.2 $\pm$ 0.4 & \textbf{89.2} $\pm$ \textbf{1.4} \\
                        & unseen  & \textbf{85.7} $\pm$ \textbf{1.1} & 80.3 $\pm$ 1.5\\
\bottomrule
\end{tabular}
\end{sc}
\end{small}
\end{center}
\vskip -0.4in
\end{table}

\subsubsection{Robustness to unseen views}
To setup a view-agnostic evaluation, of the $80$ CAD models in the ModelNet training data, we pick $6$ views for~\textbf{V-12} and $40$ views for \textbf{V-80} to create a train split. 
We create ten such trials by choosing the 50\% of the views using a different random seed. 
View-information was only used to ensure no overlap of views in train/test splits. 
We then evaluate the performance of our model on the $20$ CAD models in the test-set both for views that were \textit{seen} and \textit{unseen} in training. 

As described in Sec.~\ref{sec:model}, we train our models in a semi-supervised fashion. 
We use k-means algorithm~\cite{scikit-learn} (no. clusters set to $40$) to classify the $40$ classes in the test set.
For baselines, we train a fully supervised CNN in a view-agnostic fashion with same architecture as our sub-network.
This baseline can be considered as an upper bound of performance as it is fully supervised.

First, we examine the clustering accuracy\footnote{Clustering accuracy estimated with Kuhn's Hungarian method\label{kuhn}} for different choices of $m$ and $d$ on the test-set of unseen views in \textbf{V-12}. As shown in Figure~\ref{fig:3d-acc}, we found that $d=40$ with the number of sub-networks $m=5$ gave the best performance. Consistent with our theory, $m>\sqrt{d}$ did not improve the performance further. The dip in performance for $m\geq7$ in this case maybe due to the limited data for larger networks.

Then, we compare the clustering performance of the chosen model on the test set for the views seen and unseen during training, as well as with the supervised baseline. 
As shown in Table~\ref{table:3d-object}, for our method, there is no significant\footnote{Significance testing using Mann-Whitney U test at $\alpha=0.05$} difference between accuracy scores for seen and unseen views for the ten trials. The results for the supervised baseline show significantly better performance for seen views compared to that of unseen views. This suggests that our method performs better for views not in training data. Additionally for the \textbf{V-80} dataset, our model performs significantly better than the supervised baseline, suggesting the benefit of multi-view modeling in case of a denser view sampling.\linebreak
\begin{table}[t]
\caption{3D object recognition and retrieval comparison with other methods. Bold indicates results of the SoA.}
\label{table:3d-soa}
\vskip 0.15in
\begin{center}
\begin{small}
\begin{sc}
\begin{tabular}{lcc}
\toprule
Method                         & Acc. & mAP  \\
\midrule
Loop-view CNN \cite{jiang2019mlvcnn} & 0.94 & \textbf{0.93} \\
HyperGraph NN \cite{feng2019hypergraph}  & \textbf{0.97} & -    \\
Factor GAN \cite{khan2019unsupervised}     & 0.86 & -    \\
MVCNN \cite{su2015multi}            & 0.90 & 0.80 \\
Ours + 3-layer DNN             & 0.94 & 0.89 \\
\bottomrule
\end{tabular}
\end{sc}
\end{small}
\end{center}
\vskip -0.3in
\end{table}
\subsubsection{Object recognition and retrieval}\label{sec:supervised}
To evaluate our embeddings in a supervised setup, we train a model as described in sec.~\ref{sec:model} using $40$ CAD models in the train split. We extract the embeddings for the remaining $40$ CAD models and train a 3-layer fully connected (sigmoid activation) DNN to classify the object category.
We use classification accuracy and mean average precision (mAP) to evaluate recognition and retrieval tasks.
For baselines, we compare our method with the ModelNet leader-board\footref{note1} for \textbf{V-12}. 
We highlight our results in Table~\ref{table:3d-soa} in the context of state of the art (SoA) performance for this application as well as examples from widely used class of methods such as domain-invariant applications of GAN \cite{khan2019unsupervised} and multi-view CNN for object recognition \cite{sun2019multi}. Unlike our method, these methods are fully supervised and are not generally view-agnostic.

Our method performs within 4\% points of the SoA for recognition and retrieval tasks (See Table~\ref{table:3d-soa}).
In all experiments, we observed that the bound for maximum number of sub-networks is better in practice than the theoretical bound, i.e. $m\approx d^{2/5}$.
Also, the choice of $m$ only varied with $d$ and not the larger set of views $M$ which is a useful property to note for practical settings. 
The parameter $d$ however needs to be tuned for classification tasks as it depends on intra- and inter-class variabilities which determine the complexity of the downstream task.

\subsection{Pose-invariant face recognition}
\label{sec:pose}
Robust face recognition is yet another application where multi-view learning solutions are attractive because we are interested in the shared representation across different presentations of a person's face. For this task, we use the Multi-PIE face database \cite{gross2010multi} which includes face images of $337$ subjects in $15$ different poses, $20$ lighting conditions and $6$ expressions across 4 sessions. 

In Sec.~\ref{sec:3d-object}, we evaluated our model to classify object categories available for training, but with a focus on the performance of seen vs. unseen views during training. 
In this experiment, we wish to test the usefulness of our embeddings to recognize faces not seen in training.
We use a similar train/test split as in GMA \cite{sharma2012generalized} of $129$ subjects in $5$ lighting conditions ($1, 4, 7, 12, 17$) common to all four sessions as test data and remaining $120$ subjects in session 01 for training.
For performance evaluation, we use 1-NN matching with normalized euclidean distance similarity score as the metric.
The gallery consisted of faces images of the $129$ individuals in frontal pose and frontal lighting and the remaining images from all poses and lighting conditions were used as probes. All images were cropped to contain only the face and resized to $100\times100$ pixels. No face alignment was performed.

\subsubsection{Model and baselines}
For our model architecture, we first choose~$m=2$ sub-networks and examine the mv-corr value at convergence for different embedding dimension $d$. Based on this we pick $d=64$. Following our observations in the object classification task, we choose $m=4$ sub-networks. 
The sub-network architecture is the same as before (See inset Figure~\ref{fig:schematic}). 
We did not explore other architectures because our goal here was to evaluate the use of mv-corr loss and not necessarily the best performing model for a specific task. During training, we sample with replacement, $m$ face images per individual agnostic to the pose or lighting condition. For matching experiments, we extract embeddings from a single randomly chosen sub-network.

For baselines, we train deep CCA (DCCA~\citealt{andrew2013deep}) using its implementation\footnote{Deep-CCA code: \hyperlink{https://github.com/VahidooX/DeepCCA}{github.com/VahidooX/DeepCCA}} with the same sub-network architecture as ours.
We trained separate DCCA models for five poses: 15, 30, 45, 60 and 75 degrees. 
While training the two sub-networks in DCCA, we sample face images of subjects across all lighting conditions with a frontal pose for one sub-network and images of specific pose for the second. 
This matches the testing conditions where we only have frontal pose images in the gallery.
During testing we use the pose-specific sub-network to extract embeddings. We also compare our method with two other variants of GMA: GMLDA and GMMFA reported by \citet{sharma2012generalized}.

As shown in Table~\ref{table:face-pose}, our model successfully matches at least 90\% of the probe images to the frontal faces in the gallery, across all poses. The performance drop across different poses was also minimal compared to a pairwise method such as DCCA which assumes that the pose of a probe image  is available in testing conditions.
However, the view-agnostic benefit of our method and the Multi-PIE dataset needs to be viewed in the context of the broader research domain of face recognition. 
Methods such as MvDA \cite{kan2015multi} which build view-specific transformations have shown nearly 100\% face recognition rate on Multi-PIE when the pose information of the probe and gallery images was known.
Furthermore, the face images in this dataset were acquired in strictly controlled conditions. 
While it serves as an effective test-bed for benchmarking, we must consider other sources of noise for robust face recognition besides pose and lighting \cite{wang2018devil}. Our future work will focus on adapting mv-corr for face recognition in-the-wild.
\begin{table}[t]
\caption{1-NN matching accuracy comparison for pose-invariant face recognition. Bold indicates the best performing model}
\label{table:face-pose}
\vskip 0.15in
\begin{center}
\begin{small}
\begin{sc}
\begin{tabular}{lllllll}
\toprule
Method                       & $15^{\circ}$   & $30^{\circ}$   & $45^{\circ}$   & $60^{\circ}$   & $75^{\circ}$   & Avg.   \\
\midrule
GMLDA                       & 92.6 & 80.9 & 64.4 & 32.3 & 28.4 & 59.7  \\
GMMFA                       & 92.7 & 81.1 & 64.7 & 32.6 & 28.6 & 59.9  \\
DCCA                     & 82.4 & 79.5 & 73.2 & 62.3 & 51.7 & 69.8 \\
Ours & 95.7 & 93.1 & 94.5 & 92.3 & 91.1 & \textbf{93.3} \\
\bottomrule
\end{tabular}
\end{sc}
\end{small}
\end{center}
\vskip -0.3in
\end{table}

\subsection{Spoken word recognition}
\label{speech}
The multi-view datasets considered in sections~\ref{sec:3d-object} and \ref{sec:pose} for benchmarking our method were acquired in controlled conditions. They also have nearly uniform distribution of number of distinct views per class as well as as uniform number of samples per view.
In practical settings, we often have to deal with a variable number of views per class. To study the our framework in this context, we evaluate our method for spoken word recognition using the publicly available Speech Commands Dataset (SCD, \citealt{warden2018speech}).

\subsubsection{Speech Commands Dataset}
SCD includes variable number of one second audio. recordings from over 1800 speakers saying one or more of 30 commands such as ``On'' and ``Off''.  
The application of mv-corr for spoken-word recognition and text-dependent speaker recognition in SCD was studied by~\citet{somandepalli2019multiview}. Their results showed improved performance for speaker recognition task compared to the SoA in this domain~\cite{snyder2017deep}. 
Building upon their work, in this paper, we analyze spoken-word recognition on SCD in a greater detail. 

The different speakers saying the same word can be treated as multiple views to obtain discriminative embeddings of the speech commands invariant to the speaker (view).
Specifically, we are interested in the performance of our method for the case of variable number of views per class. Thus, we analyze the performance of each word with respect to the number of unique speakers (views) available for that word.
\linebreak
\newline
We choose $m=4$ sub-networks (See inset Figure~\ref{fig:schematic} for the architecture) to obtain 64-dimensional embeddings. 
Of the $1868$ speakers, we use $1000$ speakers for training and the remaining for testing to ensure that we only test on speakers (views) not seen during training. 
To assess generalizability to unseen classes, we create three folds by including $20$ words for training and the remaining $10$ words for testing. 
The models are trained in a semi-supervised fashion as described in~\ref{sec:model}.
We use the k-means algorithm to cluster the embeddings for the test splits with the number of clusters set to $10$. 

The per-class accuracy\footref{kuhn} from the clustering task is shown in Figure~\ref{fig:scd}. 
The average number of speakers across the thirty commands was $400.3\pm52.5$ which underscores the variable number of views per class.
We observe a minimal association (Spearman rank correlation = $0.12$) between the number of unique speakers per word and the per-class accuracy scores.
However, it is difficult to disambiguate this result from the complexity of the downstream learning task. That is, we may need more views for certain words to account for inter-class variability (similar sounding words e.g., ``on'' vs. ``off'' or ``tree'' vs. ``three'') and intra-class variability (e.g., different pronunciations of the word ``on'').

\begin{figure}[t!]
\vskip 0.05in
\begin{center}
\scalebox{0.6}{\begin{tikzpicture}
\begin{axis}[symbolic x coords={dog,
go,
tree,
left,
on,
off,
up,
three,
cat,
bed,
seven,
eight,
four,
five,
right,
two,
down,
stop,
six,
nine,
yes,
house,
no,
bird,
one,
marvin,
sheila,
zero,
wow,
happy},
xtick={dog,
go,
tree,
left,
on,
off,
up,
three,
cat,
bed,
seven,
eight,
four,
five,
right,
two,
down,
stop,
six,
nine,
yes,
house,
no,
bird,
one,
marvin,
sheila,
zero,
wow,
happy},  
xticklabel style={text height=0ex,
tick label style={rotate=90}}, 
ymin= 0.3, ymax= 0.85,
ytick={0.4, 0.5, 0.6, 0.7, 0.8},
width=0.7\textwidth, 
bar width=1pt,
xtick style={draw=none},
ymajorgrids=true,
ylabel={Clustering accuracy (acc.) per speech command},
] 

\addplot[ybar,fill] coordinates {
(dog,0.420)
(go,0.437)
(tree,0.471)
(left,0.510)
(on,0.538)
(off,0.549)
(up,0.581)
(three,0.582)
(cat,0.603)
(bed,0.632)
(seven,0.637)
(eight,0.653)
(four,0.667)
(five,0.670)
(right,0.683)
(two,0.683)
(down,0.685)
(stop,0.690)
(six,0.703)
(nine,0.704)
(yes,0.713)
(house,0.714)
(no,0.717)
(bird,0.722)
(one,0.760)
(marvin,0.778)
(sheila,0.797)
(zero,0.820)
(wow,0.838)
(happy,0.839)
};
\coordinate (A) at (axis cs:dog,0.659);
\coordinate (O1) at (rel axis cs:0,0);
\coordinate (O2) at (rel axis cs:1,0);
\draw [black,sharp plot,dashed] (A -| O1) -- (A -| O2);
\node at (axis cs:go,.75) [anchor=north west] {Average class acc. = 0.66};
\end{axis}
\end{tikzpicture}
}
\caption{Per-class clustering accuracy of mv-corr method for spoken word recognition in SCD}
\label{fig:scd}
\end{center}
\vskip -0.3in
\end{figure}
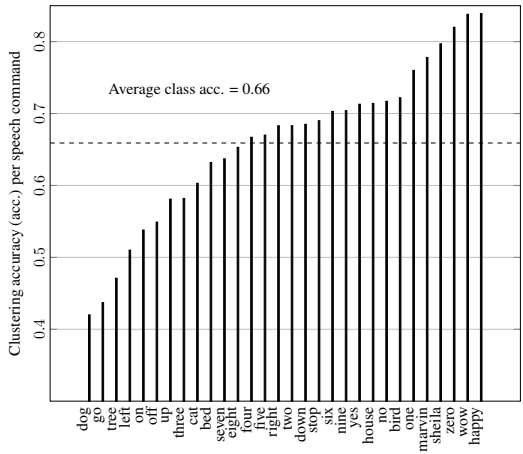

\subsubsection{Domain adversarial learning}\vspace{-0.05in}
Finally, in the context of domain adaptation for experiments with SCD, we compare our multi-view learning method with two recent domain adversarial learning methods: domain adversarial networks (DAN,~\citealt{ganin2016domain}) and cross-gradient training (CrossGrad,~\citealt{shankar2018generalizing}). The central idea of these methods is to achieve domain invariance by training models to perform better at classifying a label than at classifying the domain (view).
\begin{table}[t]
\caption{Comparison of mv-corr framework with domain adversarial methods}
\vspace{-0.15in}
\label{table:scd}
\vskip 0.15in
\begin{center}
\begin{small}
\begin{sc}
\begin{tabular}{lccc}
\toprule
Method   & DAN  & CrossGrad & Ours + 2-layer DNN \\
\midrule
Acc (\%) & 77.9 & 89.7      &\textbf{ 92.4}  \\
\bottomrule
\end{tabular}
\end{sc}
\end{small}
\end{center}
\vskip -0.3in
\end{table}

As described in Sec.~\ref{sec:supervised}, we adapt our embeddings for a supervised setting on a subset of $12$ commands in SCD to compare with the results in \citet{sharma2012generalized}. 
We first train the mv-corr model of four sub-networks using $500$ speakers from the training set.
We then obtain 64-dimensional embeddings on the remaining $500$ speakers and train a 2-layer fully connected DNN (sigmoid activation) to classify the $12$ commands, and test on the remaining $868$ speakers.
For baselines, we replicate the experiments for DAN and CrossGrad using released code.\footnote{CrossGrad and DAN code: \hyperlink{https://github.com/vihari/crossgrad}{github.com/vihari/crossgrad}}
We use the same splits of $500$ speakers each for training/development and $868$ speakers for testing.
The classification accuracy of our method and that of DAN and CrossGrad is shown in Table~\ref{table:scd}. 
We observed a significant improvement\footnote{Permutation test $n=10^5$, $p = 0.008$} over CrossGrad, suggesting that a multi-view formulation can be effectively used for domain adaptation problems such as in SCD.

\section{Conclusion}\vspace{-0.05in}
In this paper, we explored a neural method based on multi-view correlation (mv-corr) to capture the information shared across large number of views by bootstrapping the data from multiple views during training in a view-agnostic manner.
We discussed theoretical guarantees of view bootstrapping as applied to mv-corr and derived an upper bound for the number of views to subsample for a given embedding dimension.
Our experiments on 3D object classification and retrieval, pose-invariant face recognition and spoken word recognition showed that our approach performs on par with competitive methods in the respective domains.
Our results underscore the applicability of our framework for large-scale practical applications of multi-view data where we may not know how the multiple corresponding views were acquired.
In future work, we wish to extend the ideas of view-bootstrapping and related theoretical analysis to the broader class of multi-view learning problems.




\onecolumn
\icmltitle{Generalized Multi-view Shared Subspace Learning using View Bootstrapping}
\icmlsetsymbol{equal}{*}


\renewcommand{\thesection}{S\arabic{section}}  

\section*{Supplementary Methods}
The following sections provide detailed proofs for propositions, lemmas and the theorem presented in the associated ICML submission.
We also provide details of simulation analysis that we conducted to support one of the claims made in the paper.
\tableofcontents
\vspace{0.1in}
\begin{table}[htbp]
\begin{tabular}{l | p{5cm} }
\toprule
Section & Link \\
\midrule\\
Table of Notations & \ref{table:notation}\\[0.2cm]
Proposition: Total-view Covariance & \ref{prop:rt}\\[0.2cm]
Proposition: Multi-view correlation objective is bounded above by 1 & \ref{prop:rho}\\[0.2cm]
Simulation Experiments & \ref{sec:supp-affinity}\\[0.2cm]
Lemma: Upper Bound for Bootstrapped Within-View Covariance & \ref{sec-lemma:rw} \\[0.2cm]
Lemma: Upper Bound for Bootstrapped Total-View Covariance & \ref{sec-lemma:rt} \\[0.2cm]
Theorem: Error of the Bootstrapped Multi-view Correlation & \ref{thm:rho} \\[0.2cm]
\bottomrule
\end{tabular}
\end{table}

\newpage
\section*{Notation}
\begin{table}[htbp]\caption{Notations used in the proofs and text}
\label{table:notation}
\begin{center}
\begin{tabular}{r | p{10cm} }
\toprule
$N$ & Number of samples \\
$M$ & Number of views \\
$d$ & Embedding dimension\\
$m$ & Bootstrap view sample size / number of subsampled views\\
$\*x_i\in\R^d$ & Embedding/feature vector\\
$i=1,\dotsc,N$ & Index for sample \\
$l=1,\dotsc,M$ & Index for view \\
$\*X_l=[\*x_1,...,\*x_N] \in \R^{d\times N}$ & $l^{th}$-view data matrix\\
$\*X=[\*X_1,\dotsc,\*X_M]$ & Multi-view data matrix. Assume mean-zero columns without loss of generality\\
$\*R_b\in\R^{d\times d}$ & Sum of between-view covariance matrices for $m$ views: \textit{Between-view covariance}\\
$\*R_w\in\R^{d\times d}$ & Sum of within-view covariance matrix : \textit{Within-view covariance}\\
$\*R_t\in\R^{d\times d}$ & Total-view covariance matrix \\
$\*\Sigma_b$ & Between-view covariance for $M$ views \\
$\*\Sigma_w$ & Within-view covariance for $M$ views \\
$\*\Sigma_t$ & Total-view covariance for $M$ views \\
$\*A_l\in\R^d$ & $d$-dimensional feature row, mean-zero and $\norm{\*A_l}_2=1$\\
$\*A^{(i)}=[\*A_1;\dotsc;\*A_m]\in\R^{m\times d}$ & View-matrix from the $i^{th}$ sample for $m$ views with features as rows\\
$\*X = [\*A^{(1)},\dotsc,\*A^{(N)}]$ & Rearranged m-view data matrix \\
$\*W \in\R^{d\times d}$ & Shared subspace / Common Eigenspace of $\*R_b$ and $\*R_w$   \\
$\norm{\cdot}_2 \equiv \norm{\cdot}$ & Spectral norm \\
$\norm{\cdot}_{\psi_1}$  & Sub-exponential norm \\
$\norm{\cdot}_{\psi_2}$ & Sub-gaussian norm \\
\bottomrule
\end{tabular}
\end{center}
\label{tab:TableOfNotationForMyResearch}
\end{table}

\newpage
\section{Proposition: Total-view Covariance}\label{prop:rt}
Consider the sum of $\*R_b$ and $\*R_w$ which includes $M^2$ terms. Note that we assume $\*X_l:l=1,\dotsc,M$ to have mean-zero columns. Therefore covariance estimation is just the cross-product:
\begin{align*}\label{eq:rt}
    \*R_w + \*R_b 
    &= \frac{1}{M}\sum_{l=1}^{M}{\*X}_{l}({\*X}_{l})^{\top} + \frac{1}{M}\sum_{k=1}^{M}\sum_{l=1,l\neq k}^{M}{\*X}_{l}({\*X}_{k})^{\top} &&\text{[By definition]} \\\nonumber
    &= \frac{1}{M}\sum_{l=1}^{M}\sum_{k=1}^{M}{\*X}_{l}({\*X}_{k})^{\top} &&\text{[Summing all terms]}\\\nonumber
    &= \frac{1}{M}\bigg(\sum_{l=1}^{M}{\*X}_{l}\bigg)\bigg(\sum_{l=1}^{M}{\*X}_{l}\bigg)^{\top} =\*R_t &&\text{[Total-view covariance]}\\\nonumber
\end{align*}
where the total-view matrix is $\sum_{j=1}^{M}\*X_j$.
Thus, $\*R_t$ can be easily estimated as the covariance of a single total-view matrix, without having to consider the sum of $M^2-M$ covariance matrices. 
Note that we excluded the normalization factor $N-1$ in the esimtation of the covariance terms above.
This gives us the following useful relation which simplifies many computations in practice.
\begin{align}
    \*R_t &= \*R_b + \*R_w
\end{align}

\newpage
\section{Proposition: Multi-view correlation objective is bounded above by 1}\label{prop:rho}

Recall the multi-view correlation objective for $M$ views:
\begin{align}\label{eqn:rho-M}
    \rho_{M}
    = \max_{\*W}\frac{1}{d(M-1)}\frac{\Tr(\*W^{\T}\*R_b\*W)}{\Tr(\*W^{\T}\*R_w\*W)} 
\end{align}

It is desirable to have an upper bound for the objective similar to the correlation coefficient metric which is normalized to have a maximum value of 1.
 Let us begin with the definition of the multi-view correlation matrix:
 \begin{eqnarray}\label{eqn:mv-corr}
    \*\Lambda = \max_{\*W}\frac{1}{M-1}\frac{\*W^{\T}\*R_b\*W}{\*W^{\T}\*R_w\*W}
\end{eqnarray}
Here, $\*W\in\R^{d\times M}\,,M\leq d$

Define a matrix $\*Y_l = \*W^{\T}\*X_l\in\R^{M\times N}\,,M\leq d$ where the column vectors $\*y\in\R^M$ are a low-dimensional projection of the input features $\*X$. The column vector elements are $y_i^l\in\R:i=1,\dotsc,N;l=1,\dotsc,M$ with that the ratio in Eq.~\ref{eqn:mv-corr}, ignoring the max operation can be written as:

\begin{align*}
    \*\Lambda &= \frac{1}{M-1}\frac{\*W^{\T}(\*X_1\*X_2^{\T}+\dotsc+\*X_M\*X_{M-1}^{\T})\*W}{\*W^{\T}(\*X_1\*X_1^{\T}+\dotsc+\*X_M\*X_M^{\T})\*W} \\
    &= \frac{1}{M-1}\frac{(\*Y_1\*Y_2^{\T}+\dotsc+\*Y_M\*Y_{M-1}^{\T})}{(\*Y_1\*Y_1^{\T}+\dotsc+\*Y_M\*Y_M^{\T})} \\
    &= \frac{1}{M-1}\frac{\sum_i\sum_l\sum_{k\neq l}y_i^ly_i^k}{\sum_i\sum_l (y_i^l)^2}\\
    &= \frac{1}{M-1}\frac{r_b}{r_w}
\end{align*}

To show that $\rho\leq1$, we can also equivalently prove the following expression is non-negative:

\begin{align*}
    0 &\leq (M-1)r_w - r_b 
    = (M-1)r_w - (r_t - r_w)  &&\text{[From total-covariance proposition: Sec.\ref{prop:rt}]} \\
    &= Mr_w - r_t 
    = M\sum_i\sum_l(y_i^l)^2 - \sum_i\big(\sum_l y_i^l\big)^2
    := F
\end{align*}

Now, we need to find the $y_i^l$ that minimizes $F$. Therefore, take the gradient of $F$ with respect to $y_i^l$ and check if the curvature is non-negative where the gradient is zero.

\begin{align}
    \pdv{F}{y_i^l} 
    &= 2My_i^l - 2\sum_j\sum_ly_j^k\sum_l\delta_{ji}^{kl}
    = 2My_i^l - 2\sum_ky_j^k\\
    \pdv[2]{F}{y_i^l}{y_j^k} &= 2M\delta^{lk}_{ij} - 2\sum_t\delta_{ji}^{jt} 
    = 2\delta_{ji}(M\delta^{lk}-1) := J
\end{align}

Solving for $\pdv{F}{y}=0$ has a unique solution: $y_i^l = \frac{1}{M}\sum_k y_i^k = \Bar{y}_i^*$. Putting this result back gives $F=0$ at this solution.
To show this solution minimizes $F$ and therefore $\rho<1$, we need to show that the Jacobian $J$ in Eq.~5 has only non-negative eigenvalues. Note that there are only $\delta$ variables in Eq.~5. Thus, in a matrix form across all views we have $J=M\*I_M - \*I_M$ yielding non-negative eigenvalues. Hence $\rho\leq1$

\newpage
\section{Simulation Experiments}\label{sec:supp-affinity}
In order to show that the output embeddings from the sub-networks are maximally correlated. we need to empirically show that mv-corr is learning highly correlated vector representations.
For this, we generate synthetic observations as detailed in \cite{parra2018correlated} where the number of common signal components across the different views is known. 
Because the source signal is given, we can also empirically examine the correlation of the shared components with the source signal.

\subsection{Data generation}
Consider $N$ samples of signal and noise components for $M$ views to be $\*s^l_n\in\mathbb{R}^K$ and $\*b^l_n\in\mathbb{R}^D\, n=1,...,N\,, l=1,...,M\,, K<D$ respectively, both drawn from standard normal distribution. 
Because our objective is to obtain correlated components across the views, we fixed the same signal component across the $M$ views, i.e, $\*s^l_n \approx \*s_n$, but corrupted with a view-specific noise$\ \*{\eta^l}$.  
Thus, signals were mapped to the measurement space as$\ \*x_{s,n}^l=\*A_s^l\*s_n + \*{\eta}^l, \*x_{b,n}^l=\*A_b^l\*b^l_n$ and were z-normalized. 
The multiplicative noise matrices were generated as $\*A_s^l = \*O_s^l\*D_s^l \in \mathbb{R}^{D\times K}$ and $\*A_b^l=\*O_b^l\*D_b^l \in \mathbb{R}^{D\times D}$ The two matrices $\*O_s^l\in\mathbb{R}^{D\times K}\text{ and }\*O_b^l\in\mathbb{R}^{D\times D}$ are composed of orthonormal columns. 

The non-zero eigenvalues of the signal and noise covariance matrices were set with$\ \*D_s^l\in\mathbf{R}^{K\times K}$ and $\*D_b^l\in\mathbf{R}^{D\times D}$ by constructing$\ D_{ii}=\exp(d_i), d_i\sim\mathcal{N}(0,1)$. 
We used different matrices $\*A_s^l$ and $\*A_b^l$ to simulate a case where the different views of the underlying signal are corrupted by different noise. 
As is the case with many real world datasets, the noise in the measurement signal is further correlated between the views. 
We simulated this by $\*x_{b,t}^l\gets \alpha\*x_{b,n}^l + (1-\alpha)\*x_{b,n}^l , \alpha \in [0,1]$. 
Finally the SNR of the measurements is controlled by $\beta$ to generate the multiview data as $\*y^l_n = \beta\*x_{s,n}^l + (1-\beta)\*x_{b,n}^l , \beta \in [0,1]$ resulting in a data matrix of size $\ N\times\ D\times M$ with $N$ samples of $D$-dimensional data from $M$ views.
For all our experiments, we generated data with $N=100000, D=1024, K=10, M=4, \beta = 0.7$ and spatial noise correlation $\alpha=0.5$.

\subsection{Deep mv-corr Model}
The network consists of 4 sub-networks where each sub-network is composed of 2 fully connected layers of 1024 and 512 nodes which is then fed into an embedding layer with $d=[5, 10, 15, 20, 40, 50, 64, 128]$ neurons. 
The output embedding dimension was varied in order to examine the affinity of the representations with the source signal. This is important, since in real world applications the number of correlated components is not known apriori.
The models were trained as explained in the main paper.

\subsection{Affinity metrics to measure correlation}
The benefit of using synthetic data is that we can examine what the network learns when the generative process is known. 
The affinity measures we use enable us to compare the similarity of the embedding subspaces to that of the source signal.
The objective of our simulations is to measure if the correlated signal components are correctly identified from the measurements. 
Because the components with equal $\rho$ can be produced by arbitrary linear combination of the vectors in the corresponding subspace, we examined the normalized affinity measure between two subspaces as defined in \cite{soltanolkotabi2014robust} to compare the representations with the source signal. 
Let $\ \Hat{\*X}_s^l \in \mathbf{R}^{T\times K'}$ be the reconstructed signal or the representation learnt by optimizing eqn. 11 corresponding to the source signal $\ \*X_s^l \in \mathbf{R}^{T\times K}$. The affinity between $\ \Hat{\*X}\text{ and }\*X$ can be estimated using the principal angles$\ \theta^{(\cdot)}$ as:
\begin{equation}
    \text{aff}(\*X, \Hat{\*X}) = \sqrt{\frac{\cos^2\theta^{(1)} + ... +\cos^2\theta^{(K\wedge K')} }{K\wedge K'}}
\end{equation}
The cosine of the principal angles $\ \theta$ are the singular values of the matrix $\ \*U^{\top}\*V$ where$\ \*U$ and $\ \*V$ are the orthonormal bases for $\ \*X \text{ and } \Hat{\*X}$ respectively. The affinity is a measure of correlation between subspaces and has been extensively used to compare distance between subspaces in the subspace clustering literature \cite{soltanolkotabi2014robust}. This measure is low when the principal angles are nearly orthogonal and has a maximum value equal to one when one of the subspaces is contained in the other.

One of the benefits of using this affinity measure is that it allows us to compare two subspaces of different dimensions. 
We estimate two affinity measures: 1) \textit{reconstruction affinity}, $\ R_a$: average affinity between the reconstructed signal and the source signal across the$\ N$ views and 2) \textit{inter-set affinity}, $\ R_s$: average affinity between the different views of the reconstructed signal. Formally,
{
\begin{eqnarray}
    R_a = \frac{1}{N}\sum_{l=1}^{N}\text{aff}(\*X_s^l,\Hat{\*X}_s^l) \\
    R_s = \frac{2}{N(N-1)}\sum_{l=1}^{N}\sum_{{\substack{k=1 \\ l\neq k}}}^{N}\text{aff}(\Hat{\*X}_s^l,\Hat{\*X}_s^k) 
\end{eqnarray}}

Figure \ref{fig:aff} shows the reconstruction affinity measure ($R_a$) and the inter-set affinity measure ($R_s$) for these parameters.
Notice that the maximum $R_a$ is achieved for the embedding dimension of 10 (which is the number of correlated components used to generate the data) indicating that the dMCCA retains some notion of the ambient dimension for maximizing correlation between views.
The $R_s$ measure consistently decreased with increasing embedding dimension.
Because we estimate covariances in the loss function and use SGD with mini-batches for optimization, we also examine the performance with varying batch sizes. As shown in Fig. \ref{fig:aff} a mini-batch size greater than 400 gives consistent results. The results from this simulation study suggests that the multi-view embeddings are maximally correlated. Hence during inference we can use any sub-network to extract the embeddings.

\begin{figure}[t]
    \centering
    \includegraphics[width=0.8\textwidth]{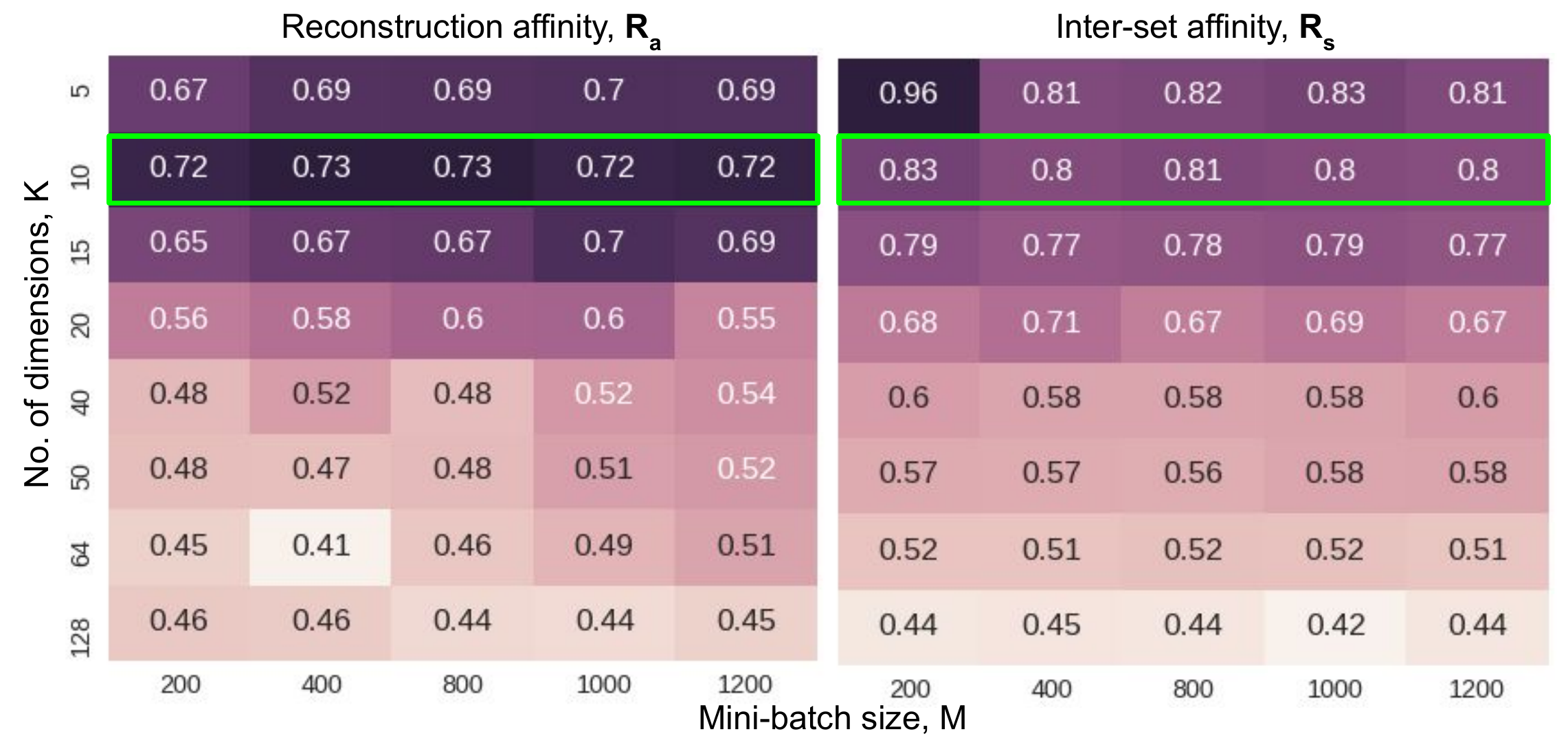}
    \caption{{{Affinity measures for synthetic data. Number of correlated components in the generated data is 10 (boxed)}}}
    \label{fig:aff}
\end{figure}

\newpage
\section{Lemma: Upper Bound for Bootstrapped Within-View Covariance}\label{sec-lemma:rw}

\begin{lemma}\label{lemma:view-matrix}
(Subsampled view matrices, approximate isotropy) Let $\*A$ be a $m\times d$ matrix created by subsampling $m$ views from a larger, unknown number of views. The rows $\*A_i$ of the matrix $\*A$ are independent subgaussian random vectors in $\mathbb{R}^d$ 
and a second moment matrix $\*{\Sigma}=\mathbb{E}\*A_i\otimes\*A_i$. Then for every $t\geq 0$, with probability at least $1-2\exp(-ct^2)$ we have
\begin{align}\label{eq:view-matrix}
    \norm{\frac{1}{m}\*A^{\top}\*A - \*{\Sigma}} \leq \max(\delta, \delta^2) \quad where \quad \delta = C\sqrt{\frac{d}{m}} + \frac{t}{\sqrt{m}}
\end{align}
Here $C, c >0$ depend only on the subgaussian norm $K = \max_i\norm{\*A_i}_{\psi_2}$ of the view space
\end{lemma}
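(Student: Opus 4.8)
The plan is to reduce the statement to a standard covariance-estimation bound for subgaussian rows, in the spirit of Vershynin's treatment of the isotropic case, but allowing a general second moment matrix $\*\Sigma$. First I would normalize the problem: since $\*\Sigma = \E\*A_i\otimes\*A_i$ is symmetric positive semidefinite, I would like to ``whiten'' the rows by writing $\*A_i = \*\Sigma^{1/2}\*Z_i$ where the $\*Z_i$ are now (approximately) isotropic, and then control $\norm{\*\Sigma^{1/2}(\frac{1}{m}\*Z^\T\*Z - \*I)\*\Sigma^{1/2}} \le \norm{\*\Sigma}\cdot\norm{\frac{1}{m}\*Z^\T\*Z - \*I}$. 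This requires $\*\Sigma$ to be invertible; if it is only PSD I would restrict to its range, or equivalently note that the rows live in that subspace almost surely, so the effective dimension is still at most $d$. The subgaussian norm of the $\*Z_i$ is controlled by $K$ and $\norm{\*\Sigma^{-1/2}}$, which gets absorbed into the constant $C$ that the lemma allows to depend on $K$ and the view space.

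The core estimate is then the isotropic one: $\norm{\frac{1}{m}\*Z^\T\*Z - \*I} \le \max(\delta,\delta^2)$ with $\delta = C\sqrt{d/m} + t/\sqrt{m}$, with probability at least $1 - 2\exp(-ct^2)$. I would prove this by the usual three-step argument. Step one: an $\varepsilon$-net argument reduces the operator norm $\norm{\frac{1}{m}\*Z^\T\*Z - \*I} = \sup_{\norm{\*x}=1}\abs{\frac{1}{m}\norm{\*Z\*x}_2^2 - 1}$ to a maximum over a $\frac14$-net $\mathcal{N}$ of the sphere, of cardinality at most $9^d$, at the cost of a factor like $2$. Step two: for a fixed unit vector $\*x$, $\frac{1}{m}\norm{\*Z\*x}_2^2 - 1 = \frac{1}{m}\sum_{i=1}^m(\<\*Z_i,\*x\>^2 - 1)$ is an average of independent, mean-zero, subexponential random variables (squares of subgaussians), so Bernstein's inequality gives $\Pr[\abs{\frac{1}{m}\norm{\*Z\*x}_2^2 - 1} \ge \tfrac{\delta}{2}] \le 2\exp(-c'm\min(\delta^2,\delta))$. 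Step three: a union bound over $\mathcal{N}$ works because $9^d \cdot 2\exp(-c'm\min(\delta^2,\delta)) \le 2\exp(-ct^2)$ once $\delta \ge C\sqrt{d/m} + t/\sqrt{m}$, since then $m\min(\delta^2,\delta) \gtrsim d + t^2$ absorbs the $\log 9^d = d\log 9$ term and leaves $t^2$ to spare. Undoing the whitening multiplies the right-hand side by $\norm{\*\Sigma}$, which for our application is $O(1)$ because the rows have unit norm (so $\Tr\*\Sigma$, hence each eigenvalue, is bounded), and this too is folded into $C$.

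The main obstacle, and the place where care is needed, is the handling of a general (possibly singular or ill-conditioned) $\*\Sigma$: the whitening step cleanly transfers the isotropic bound only when $\*\Sigma$ is well-conditioned, and in general one picks up a factor of $\norm{\*\Sigma^{-1/2}}$ inside the subgaussian norm of the $\*Z_i$ and a factor $\norm{\*\Sigma}$ outside. For the application in the paper both are controlled because the view vectors $\*A_l$ have unit Euclidean norm and are identically distributed, which pins down $\Tr\*\Sigma_w = \E\norm{\*A_l}_2^2 = 1$ and keeps the spectrum bounded; but stating the lemma in full generality requires either an explicit conditioning hypothesis on $\*\Sigma$ or the convention that $C$ may depend on the whole distribution. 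The cleanest route is to absorb everything into the "depends only on the view space" clause in the lemma statement, and to note that the $\max(\delta,\delta^2)$ form is exactly what one needs so that when $\delta$ is small the bound is $O(\sqrt{d/m})$ and when $\delta$ is large it is $O(d/m)$, which is precisely the $\delta_w = O(d/m)$ regime invoked in the proof sketch of the main theorem (there $d > m$, so $\delta \gtrsim 1$ and the $\delta^2$ branch is active).
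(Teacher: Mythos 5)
Your three-step skeleton (quarter-net of cardinality $9^d$, Bernstein for a fixed direction, union bound absorbing $d\log 9$ and leaving $t^2$) is exactly the paper's argument, which follows Vershynin's Theorem 5.39. The genuine problem is the whitening reduction you put in front of it. The lemma asserts that $C,c$ depend \emph{only} on $K=\max_i\norm{\*A_i}_{\psi_2}$, but your reduction writes $\*A_i=\*\Sigma^{1/2}\*Z_i$ and then needs the subgaussian norm of the whitened rows, $\norm{\*Z_i}_{\psi_2}\le\norm{\*\Sigma^{-1/2}}\,K$, which is not controlled by $K$: when the views are strongly correlated, $\*\Sigma$ is nearly singular on its range and $\norm{\*\Sigma^{-1/2}}$ blows up, so the constant you obtain degrades with the conditioning of $\*\Sigma$ (the outer factor $\norm{\*\Sigma}\le 1$ from unit-norm rows is the harmless part). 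You flag this yourself, but your proposed fixes (add a conditioning hypothesis, or let $C$ depend on the whole distribution) change the statement rather than prove it, and in the paper's application there is no reason to expect $\*\Sigma_w$ to be well conditioned.

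The fix is to delete the whitening entirely, which is what the paper does: run the net argument directly on $\*D=\frac{1}{m}\*A^{\T}\*A-\*\Sigma$, so that $\norm{\*D}\le 2\max_{\*z\in\mathcal N}\abs{\<\*D\*z,\*z\>}$ with $\<\*D\*z,\*z\>=\frac{1}{m}\sum_{i=1}^m\bigl(\<\*A_i,\*z\>^2-\E\<\*A_i,\*z\>^2\bigr)$. The Bernstein step never needed isotropy: for a fixed unit $\*z$, $\<\*A_i,\*z\>$ is subgaussian with $\psi_2$-norm at most $K$, hence $\<\*A_i,\*z\>^2-\E\<\*A_i,\*z\>^2$ is centered subexponential with $\psi_1$-norm bounded by an absolute constant times $K^2$, regardless of $\*\Sigma$. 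The exponential deviation inequality then gives $\Pr\bigl[\abs{\<\*D\*z,\*z\>}\ge\epsilon/2\bigr]\le 2\exp\bigl[-c\,m\min(\epsilon^2/K^4,\epsilon/K^2)\bigr]$, and with $\delta=C\sqrt{d/m}+t/\sqrt{m}$, $\epsilon=\max(\delta,\delta^2)$, the union bound over the $9^d$ net points yields the claim with probability $1-2\exp(-ct^2)$ and constants depending only on $K$, exactly as stated. Your concluding remark about the regimes is fine: in the paper's use $d>m$, so the $\delta^2$ branch is active and the bound is $\mathcal{O}(d/m)$.
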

\begin{proof}
This is a straight-forward generalization of Theorem 5.39~\cite{vershynin2010introduction}  for non-isotropic spaces. 
The proof involves \textit{covering argument} which uses a net $\mathcal{N}$ to discretize the compact view space, which is all the vectors $\*z$ in a unit sphere $\mathcal{S}^{d-1}$. Similar to \cite{vershynin2010introduction}, we prove this in three steps:
\begin{enumerate}
    \item \textbf{$\mathcal{N}_{\epsilon}$ Approximation:} Bound the norm $\norm{\*A\*z}_2$ for all $\*z\in\mathbb{R}^d$ s.t. $\norm{\*z}_2=1$ by discretizing the sphere with a 1/4-net.
    \item \textbf{Concentration:} Fix a vector $\*z$, and derive a tight bound of $\norm{\*A\*z}_2$.
    \item \textbf{Union bound:} Take a union bound for all the $\*z$ in the net
\end{enumerate}
\textbf{Step 1: $\mathcal{N}_{\epsilon}$ Approximation.} From \cite{vershynin2010introduction}, we use the following statement:
\begin{align}
    \exists \delta > 0,\quad \norm{\*B^{\top}\*B - \*I} \leq max(\delta, \delta^2) \implies \norm{\*B}_2 \leq 1 + \delta 
\end{align}
We evaluate the operator norm in eq.~\ref{eq:view-matrix} as follows:
\begin{flalign*}
    \norm{\frac{1}{m}\*A^{\top}\*A - \*{\Sigma}}  &= \norm{\frac{1}{m}\*A^{\top}\*A - \frac{1}{m}\mathbb{E}\*A^{\top}\*A} \\
    &= 
    \norm{\frac{1}{m}\Sigma_{i=1}^{m}\*A_i\*A_i^{\top} - \frac{1}{m}\Sigma_{i=1}^{m}\mathbb{E}\*A_i\*A_i^{\top}}
\end{flalign*}
Let $\*D:=\frac{1}{m}\sum_{i=1}^{m}\*A_i\*A_i^{\top} - \frac{1}{m}\Sigma_{i=1}^{m}\mathbb{E}\*A_i\*A_i^{\top}$. Choose a $\epsilon'$-net $\mathcal{N}$ such that $\lvert\mathcal{N}\rvert\leq 9^{d}$ which provides sufficient coverage for the unit sphere $\mathcal{S}^{d-1}$ at $\epsilon'=1/4$. Then, for every $\*z\in\mathcal{N}$ we have (using Lemma 5.4 in \cite{vershynin2010introduction}),
\begin{flalign*}
    \norm{\*D} 
    &\leq {\max_{\substack{\*z\in\mathcal{N} \\ \norm{\*z}=1}}}\abs*{\<\*D\*z, \*z\>} \\
    &\leq \frac{1}{1-2\epsilon'}\max_{\substack{\*x\in\mathcal{N} \\ \norm{\*z}=1}}\norm{\*z^{\top}\*D\*z} \\
    &\leq 2\max_{\*z\in\mathcal{N}}\norm{\*z^{\top}\*D\*z}
\end{flalign*}
For some $\epsilon>0$, we want to show that the operator norm of $\*D$ is concentrated as
\begin{align}
    \max_{\*z\in\mathcal{N}}\norm{\*z^{\top}\*D\*z} \leq \frac{\epsilon}{2} \text{ where } \epsilon:=\max(\delta, \delta^2)
\end{align}
\textbf{Step 2: Concentration.} Fix any vector $\*z\in\mathcal{S}^{d-1}$ and define $Y_i = \*A_i^{\top}\*z - \mathbb{E}\*A_i^{\top}\*z$ where $\*A_i$ are subgaussian random vectors by assumption with $\norm{\*A_i}_{\psi_2}=K$. 
Thus, $Y_i$ $i=1,\dotsc,m$ are independent subgaussian random variables.
The subgaussian norm of $Y_i$ is calculated as,
\begin{align}
    \norm{Y_i}_{\psi_2} = \norm{\*A_i^{\top}\*z - \mathbb{E}\*A_i^{\top}\*z }_{\psi_2} 
    \leq 2\norm{\*A_i^{\top}\*z}_{\psi_2} 
    \leq 2\norm{\*A_i}_{\psi_2}\norm{\*z} 
    = 2K
\end{align}
The above relation is an application of triangular and Jensen's inequalities: $\norm{X-\mathbb{E}X} \leq 2\norm{X}$ with $\abs*{\mathbb{E}X}\leq\mathbb{E}\abs*{X} \leq \abs*{X}$. 
Similarly, $Y_i^2$ are independent subexponential random variables with the subexponential norm $K_e = \norm{Y_i}_{\psi_1} \leq \norm{Y_i}_{\psi_2}^2 \leq 4K^2$. Finally, by definition of $Y_i$, we have
\begin{align}\label{eq:sum}
    \norm{\*z^{\top}\*D\*z} = \frac{1}{m}\abs*{\Sigma_{i=1}^{m}Y_i^2}
\end{align}
We use the exponential deviation inequality in Corollary 5.17 from \cite{vershynin2010introduction} to control the summation term in eq.~\ref{eq:sum} to give:
\begin{align}
    P\Big(\norm{\*z^{\top}\*D\*z} \geq \frac{\epsilon}{2}\Big) &= 
    P\Big(\frac{1}{m}\abs*{\Sigma_{i=1}^{m}Y_i^2} \geq \frac{\epsilon}{2}\Big) \\ \nonumber
    &\leq 2\exp\Bigg[-c\min\bigg(\frac{\epsilon^2}{4K_e^2},\frac{\epsilon}{2K_e}\bigg)m\Bigg] \nonumber
\end{align}
Note that $\epsilon:=max(\delta, \delta^2)$. If $\delta\geq1$ then $\epsilon=\delta^2$. Thus, $\min(\epsilon, \epsilon^2) = \delta^2$. Using this and the fact that $K\geq2\norm{Y_i}_{\psi_2}\geq1$, we get
\begin{align}
     P\Big(\norm{\*z^{\top}\*D\*z} \geq \frac{\epsilon}{2}) \leq 2\exp\Bigg[-\frac{c_1}{K^4}\delta^2m\Bigg] \leq 2\exp\Bigg[-\frac{c_1}{K^4}(C^2d + t^2)\Bigg]
\end{align}
by substituting $\delta = C\sqrt{\frac{d}{m}} + \frac{t}{\sqrt{m}}$ and using $(a+b)^2\geq a^2+b^2$.

\textbf{Step 3: Union Bound.} Using Boole's inequality to compute the union bound over all the vectors $\*z$ in the net $\mathcal{N}$ with cardinality $\abs*{\mathcal{N}}=9^d$, we get
\begin{align}
    P\Bigg\{\max_{\*z\in\mathcal{N}}\norm{\frac{1}{m}\*A^{\top}\*A - \*{\Sigma}} \geq \frac{\epsilon}{2}\Bigg\} \leq 9^d\cdot2\exp\Bigg[-\frac{c_1}{K^4}(C^2d + t^2)\Bigg]
\end{align}
Pick a sufficiently large $C=C_K\geq K^2\sqrt{\log9/c_1}$, then the probability
\begin{align}
    P\Bigg\{\max_{\*z\in\mathcal{N}}\norm{\frac{1}{m}\*A^{\top}\*A - \*{\Sigma}} \geq \frac{\epsilon}{2}\Bigg\} &\leq \frac{2}{\exp\Big(d+\frac{c_1t^2}{K^4}\Big)} \\ \nonumber 
    &\leq 2\exp{\Big(-\frac{c_1t^2}{K^4}\Big)}
\end{align}
Thus with a high probability of at least $1-2\exp{(-ct^2)}$ eq.~\ref{eq:view-matrix} holds. In other words, the deviation of the subsampled view matrix from the entire view space, in spectral sense is $\mathcal{O}(d/m)$
\end{proof}

\begin{lemma}
(Subsampled within-view covariance bound) Let $\*X$ be the $N\times m\times d$ tensor whose elements $\*A\in\mathbb{R}^{m\times d}$ are identically distributed matrices with rows $\*A_i$ representing $m$-views sampled from a larger set of views in $\mathbb{R}^d$. If $\*A_i$ are independent sub-gaussian vectors with second moment $\*{\Sigma}_w$, then for every $t\geq0$, with probability at least $1-2\exp{(-ct^2)}$, we have 
\begin{equation}
    \norm{{\*R}_w - \*{\Sigma_w}} \leq N\frac{C^2d+t^2}{m}
\end{equation}
Here ${\*R}_w$ is the sum of within-view covariance matrices for $m$ views and $C>0$ depends only on the sub-gaussian norm $K=\max_i\norm{\*A_i}_{\psi_{2}}$ of the subsampled view space.
\end{lemma}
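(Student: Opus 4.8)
The plan is to write $\*R_w$ as a sum of $N$ per-sample second-moment estimates, each built from only the $m$ subsampled views, and to control each of them with Lemma~\ref{lemma:view-matrix}; the stated rate then follows after simplifying the $\max(\delta,\delta^{2})$ term in the regime $m\le d$ that always holds here. Using the rearrangement $\*X=[\*A^{(1)},\dotsc,\*A^{(N)}]$, the within-view covariance splits across samples as
\[
  \*R_w=\frac1m\sum_{l=1}^{m}\*X_l\*X_l^{\T}=\sum_{i=1}^{N}\frac1m\,\*A^{(i)\T}\*A^{(i)},
\]
and, since the $m$ rows of each $\*A^{(i)}$ are identically distributed with second moment $\*\Sigma_w$, each summand has expectation $\frac1m\sum_{l=1}^{m}\E[\*A_l^{(i)\T}\*A_l^{(i)}]=\*\Sigma_w$, so $\E\*R_w=N\*\Sigma_w$ (abbreviated as $\*\Sigma_w$ in the statement). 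The triangle inequality for the spectral norm then gives
\[
  \norm{\*R_w-\E\*R_w}\le\sum_{i=1}^{N}\norm{\tfrac1m\,\*A^{(i)\T}\*A^{(i)}-\*\Sigma_w}.
\]

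First I would apply Lemma~\ref{lemma:view-matrix} to each slice $\*A^{(i)}$, an $m\times d$ matrix with independent subgaussian rows and second moment $\*\Sigma_w$: for any fixed $t\ge0$, with probability at least $1-2\exp(-ct^{2})$,
\[
  \norm{\tfrac1m\,\*A^{(i)\T}\*A^{(i)}-\*\Sigma_w}\le\max(\delta,\delta^{2}),\qquad \delta=C\sqrt{\tfrac dm}+\tfrac t{\sqrt m}.
\]
Because we always choose $d$ at least as large as $m$ (indeed $m\le\sqrt d$), we have $\delta\ge C\sqrt{d/m}\ge C\ge1$ after enlarging $C$ if necessary, so $\max(\delta,\delta^{2})=\delta^{2}=\tfrac1m(C\sqrt d+t)^{2}\le\tfrac2m(C^{2}d+t^{2})$; treating the absolute constant $2$ as part of the notation gives $\max(\delta,\delta^{2})\le(C^{2}d+t^{2})/m$. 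Summing $N$ such bounds recovers $\norm{\*R_w-\*\Sigma_w}\le N(C^{2}d+t^{2})/m$, i.e.\ $\delta_w$ is $\mathcal O(Nd/m)$.

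The step that needs the most care is the probabilistic bookkeeping in going from one slice to the sum of $N$. A plain union bound over the $N$ events from Lemma~\ref{lemma:view-matrix} costs a factor $N$ in the failure probability, so to keep the exponent $1-2\exp(-ct^{2})$ one must either rescale $t$ by a $\sqrt{\log N}$ factor (or shrink $c$) or, as the statement does, report the per-slice bound and use identical distribution of the $\*A^{(i)}$ to pull the $N$ outside the norm. Note that this per-sample route also avoids assuming independence \emph{across} the $N$ samples, which is why it is used rather than stacking all $mN$ view vectors into a single matrix (doing so, when the samples are independent, would in fact yield a sharper bound without the factor $N$). The remaining ingredients are routine: verifying $\delta\ge1$ so that $\max(\delta,\delta^{2})=\delta^{2}$, the elementary inequality $(a+b)^{2}\le2(a^{2}+b^{2})$, and the fact that Lemma~\ref{lemma:view-matrix} already handles a general, non-isotropic $\*\Sigma_w$, so no whitening argument is needed.
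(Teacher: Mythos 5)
Your proof follows essentially the same route as the paper's: the identical per-sample decomposition $\*R_w=\sum_{i=1}^{N}\frac{1}{m}\*A^{(i)\T}\*A^{(i)}$, the triangle inequality, an application of the subsampled view-matrix lemma to each $m\times d$ slice, and the same simplification of $\max(\delta,\delta^2)$ in the regime $m\le d$. If anything, your bookkeeping is more careful than the paper's own write-up---you use the correct inequality $(a+b)^2\le 2(a^2+b^2)$ and verify $\delta\ge 1$, and you explicitly flag the union-bound cost over the $N$ slices that the paper's ``identical sampling'' step glosses over---but the underlying argument is the same.
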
\label{lemma:rw}
\begin{proof}
Let us now consider the rearranged $m$-view subsampled data tensor $\*X\in \mathbb{R}^{N\times m\times d} = [\*A^{(1)}, ..., \*A^{(N)}]$. Let $\*A$ be the $m\times d$ view-specific data sampled identically for $N$ times. 
Without loss of generality, assume the rows to be zero mean which makes covariance computation simpler.
The rows $\*A_i$ are independent sub-gaussian vectors with second moment matrix $\*{\Sigma}=\mathbb{E}\*A^{\top}\*A$. The between-view covariance matrix ${\*R}_w$ for $m$ views can be written as:
\begin{align}
    {\*R}_w = \frac{1}{m}\sum_{i=1}^{N}\sum_{j=1}^{m}\*A_j\otimes\*A_j = \sum_{i=1}^{N}\frac{1}{m}\*A^{(i)\top}\*A^{(i)}
\end{align}
The matrix $\*A$ is a sampling of $m$ views from an unknown and larger number of views $M$ for which the $\*R_w$ is constructed. We want to bound the difference between this term and the within-view covariance of the whole space using lemma \ref{lemma:view-matrix}:
\begin{flalign*}
    \norm{{\*R}_w - \*{\Sigma_w}} &= \norm{\sum_{i=1}^{N}\frac{1}{m}\*A^{(i)\top}\*A^{(i)} - \sum_{i=1}^{N}\*{\Sigma}_w^{(i)}} \\
    &= \norm{\sum_{i=1}^{N}\frac{1}{m}\*A^{(i)\top}\*A^{(i)} - \*{\Sigma}_w^{(i)}} \\
    &\leq \sum_{i=1}^{N}\norm{\frac{1}{m}\*A^{(i)\top}\*A^{(i)} - \*{\Sigma}_w^{(i)}} &&\text{[Triangular inequality]} \\
    &= N\norm{\frac{1}{m}\*A^{\top}\*A - \mathbb{E}\*A^{\top}\*A} &&\text{[Identical sampling]}\\
    &\leq N\max{(\delta, \delta^2)} \quad\text{with}\quad \delta=C\sqrt{\frac{d}{m}} + \frac{t}{\sqrt{m}} &&\text{[From lemma \ref{lemma:view-matrix}]}\\
    &= N\cdot \Big(\sqrt{\frac{Cd+t}{m}} \Big)^2\\
    &\leq N\cdot \Big(\frac{C^2d+t^2}{m}\Big) &&\text{[$d,m>1$ and $(a+b)^2\leq a^2+b^2$]}
\end{flalign*}
\end{proof}

\newpage
\section{Lemma: Upper Bound for Bootstrapped Total-View Covariance}\label{sec-lemma:rt}
\begin{lemma}\label{lemma:rt}
(Subsampled total-view covariance bound) 
Let $\*X$ be the $N\times m\times d$ tensor whose elements $\*A\in\mathbb{R}^{m\times d}$ are identically distributed matrices with rows $\*A_i$ representing $m$-views sampled from a larger set of views in $\mathbb{R}^d$.
Construct a total-view matrix ${\*X}\in\mathbb{R}^{m\times d}$ by summing entries across all views. Let $\*{\Sigma}_t$ be the second moment of the total-view space. Then, we have 
\begin{equation}
    \norm{{\*R}_t - \*{\Sigma_t}} \leq Nm
\end{equation}
Here ${\*R}_t$ is the total-view covariance matrix and $c_2>0$ depends on the range of the total view space $k$ such that $\abs{{\*X}}\leq k$.
\end{lemma}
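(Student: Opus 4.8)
The plan is to prove this by a short, entirely deterministic spectral-norm estimate: in contrast to Lemma~\ref{lemma:rw}, no covering or concentration argument is needed here, because the total-view object is a \emph{sum} of $m$ unit vectors rather than an \emph{average} of $m$ rank-one terms. First I would set up the per-sample decomposition in the rearranged coordinates of the statement. For each sample $i$, write the total-view vector $\*s^{(i)} := \sum_{l=1}^{m}\*A_l^{(i)} = (\mathbf{1}_m^{\top}\*A^{(i)})^{\top}\in\R^{d}$; then, exactly as $\*R_w=\sum_{i=1}^{N}\tfrac1m\*A^{(i)\top}\*A^{(i)}$ in Lemma~\ref{lemma:rw} and consistently with Proposition~\ref{prop:rt}, one has $\*R_t=\sum_{i=1}^{N}\tfrac1m\,\*s^{(i)}\*s^{(i)\top}$ and $\*\Sigma_t=\sum_{i=1}^{N}\tfrac1m\,\E[\*s^{(i)}\*s^{(i)\top}]$. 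Observe that both $\*R_t$ and $\*\Sigma_t$ are positive semidefinite, being sums (respectively, sums of expectations) of rank-one PSD matrices.

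Because $\*R_t$ and $\*\Sigma_t$ are PSD, Weyl's inequality gives that every eigenvalue of $\*R_t-\*\Sigma_t$ lies in $[-\norm{\*\Sigma_t},\,\norm{\*R_t}]$, hence $\norm{\*R_t-\*\Sigma_t}\le\max\{\norm{\*R_t},\norm{\*\Sigma_t}\}$, so it suffices to bound each of these two spectral norms by $Nm$. For $\*R_t$, the triangle inequality together with the rank-one identity $\norm{\*s^{(i)}\*s^{(i)\top}}=\norm{\*s^{(i)}}_2^2$ gives $\norm{\*R_t}\le\sum_{i=1}^{N}\tfrac1m\norm{\*s^{(i)}}_2^2$, and the hypothesis $\norm{\*A_l}_2=1$ for every view yields $\norm{\*s^{(i)}}_2=\norm{\sum_{l=1}^{m}\*A_l^{(i)}}_2\le\sum_{l=1}^{m}\norm{\*A_l^{(i)}}_2=m$; hence $\norm{\*R_t}\le\sum_{i=1}^{N}\tfrac1m m^2=Nm$. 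The bound $\norm{\*\Sigma_t}\le Nm$ follows identically, using Jensen's inequality to pass the norm inside the expectation: $\norm{\E[\*s^{(i)}\*s^{(i)\top}]}\le\E\norm{\*s^{(i)}}_2^2\le m^2$. Combining, $\norm{\*R_t-\*\Sigma_t}\le Nm$, which is the claimed bound; if one only assumes $\norm{\*A_l}_2\le k$ rather than $=1$ (the role of the ``range'' constant $k$ in the statement), the identical argument returns $\norm{\*R_t-\*\Sigma_t}\le Nmk^2$, so the constant $c_2$ there is simply $k^2$ and equals $1$ under the normalization used in the paper.

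There is no genuinely hard step; the only thing to be careful about is bookkeeping the $\tfrac1m$ normalization consistently across $\*R_w$, $\*R_t$ and $\*\Sigma_t$ so that the identity $\*R_t=\*R_b+\*R_w$ of Proposition~\ref{prop:rt} continues to hold for $m$ subsampled views. The conceptual content — and the reason the lemma is stated at all — is the asymmetry it exposes: $\delta_t=\norm{\*R_t-\*\Sigma_t}$ grows like $m$ and cannot be reduced by taking more views, whereas $\delta_w=\norm{\*R_w-\*\Sigma_w}$ decays like $d/m$ by Lemma~\ref{lemma:rw}. Feeding both estimates into the ratio bound in the proof sketch of Theorem~\ref{thm:rho} is exactly what produces the $\mathcal{O}(m^2/d)$ scaling of $\rho_m$, and hence the rule $m\le\sqrt{d}$.
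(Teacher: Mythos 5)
Your proof is correct and takes essentially the same route as the paper's: both reduce $\norm{\*R_t-\*\Sigma_t}$ to per-sample rank-one terms in the summed view vector $\*s^{(i)}=\sum_{l}\*A_l^{(i)}$ and conclude deterministically from $\norm{\*s^{(i)}}_2\le m$, with no concentration argument. Your PSD/Weyl step $\norm{\*R_t-\*\Sigma_t}\le\max\{\norm{\*R_t},\norm{\*\Sigma_t}\}$ is only a minor (and in fact slightly cleaner) variant of the paper's per-sample ``triangular and Jensen'' bound, which taken literally carries a factor of $2$ that the paper silently drops; the substance of the argument is the same.
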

\begin{proof}
Consider the $m$-view subsampled data tensor rearranged with feature vectors as rows to get $\*X\in \mathbb{R}^{N\times m\times d} = [\*A^{(1)}, ..., \*A^{(N)}]$ with rows of $\*A^*$ as $\*A_i$.
Without loss of generality, assume the $d$-dimensional rows of $\*A$ to be zero mean which makes estimating covariances simpler.
The covariance ${\*R}_t$ of the total view matrix can be written as follows
\begin{align}
    {\*R}_t 
    = \frac{1}{m}\displaystyle\sum_{i=1}^{N}\Big(\sum_{i=1}^m\*A^{(i)}\Big)\Big(\sum_{i=1}^m\*A^{(i)}\Big)^{\top} = \frac{1}{m}\displaystyle\sum_{i=1}^{N}\Big(\sum_{j=1}^{m}\*A_j^{(i)}\Big)\otimes\Big(\sum_{j=1}^{m}\*A_j^{(i)}\Big)\\\nonumber
    = \frac{1}{m}\displaystyle\sum_{i=1}^{N}\Big(\sum_{j=1}^{m}\*A_j^{(i)}\Big)\otimes\Big(\sum_{j=1}^{m}\*A_j^{(i)}\Big)=\frac{1}{m}\displaystyle\sum_{i=1}^{N}\*W_i\*W_i^{\top}\nonumber
\end{align}
We want to bound the difference between this subsampled total-view covariance matrix and the second moment of the total-view space. 
Let ${\*a}^{(i)} = \sum_{j=1}^{m}\*A_j^{(i)}$ for $i=1,\dotsc,N$. The vector ${\*a}^{(i)}$ is the sum-of-views. We use a useful application of Jensen's inequality here: $\norm{X-\mathbb{E}X} \leq 2\norm{X}$ with $\abs*{\mathbb{E}X}\leq\mathbb{E}\abs*{X} \leq \abs*{X}$
\begin{flalign*}
\norm{{\*R}_t - \*{\Sigma}_t} 
    &= \frac{1}{m}\norm{\sum_{i=1}^{N}{\*a}^{(i)}{\*a}^{(i)\top} - \sum_{i=1}^{N}\*{\Sigma}_t^{(i)}} \\
    &\leq \frac{1}{m}\sum_{i=1}^{N}\norm{{\*a}^{(i)}{\*a}^{(i)\top} - \*{\Sigma}_t^{(i)}} &&\text{[Triangular inequality]} \\
    &= \frac{N}{m}\norm{{\*a}{\*a}^{\top} - \mathbb{E}{\*a}{\*a}^{\top}} &&\text{[Identical sampling]}\\
    &\leq \frac{N}{m}\norm{\*a\*a^{\T}} &&\text{[Triangular and Jensen's inequality]}\\
    &= \frac{N}{m}m^2 = Nm&&\text{[From assumption: $\norm{\*a}_2=1$]}
\end{flalign*}
\end{proof}

\newpage
\section{Theorem: Error of the Bootstrapped Multi-view Correlation}\label{thm:rho}

\begin{theorem}{
Let $\*X=[\*A^{(1)},\dotsc,\*A^{(N)}]$ be the $m\times d$ matrices of $m$ views sampled from an unknown number of views $M$. Let the rows $\*A_l$ of the view matrices $\*A$ be independent subgaussian vectors in $\R^d$ with $\norm{\*A_l}_2=1:l=1,\dotsc,m$. Then for  any $t\geq0$, with probability at least $1-2\exp({-ct^2})$, we have
\begin{flalign*}
{\rho_m}\leq \max\bigg(1, C\frac{m^2}{(\sqrt{d}+t)^2}\rho^{*}\bigg)
\end{flalign*}
\textit{Here, $\rho_m$ and $\rho^*$ are the mv-corr objectives for subsampled views $m$ and the total number of views $M$ respectively. The constant $C$ depends only on the subgaussian norm $K$ of the view space, with $K = \displaystyle\max_{i,l}\norm{\*A_l^{(i)}}_{\psi_2}$}
}\end{theorem}

\begin{proof}
Starting from the objective defined in the main paper and ignoring the normalization factors, the objective $\rho_m$ for $m$ views can be rewritten as:
\begin{align*}
    \rho_m = \frac{\text{Tr}(\*W^{\top}{\*R}_B\*W)}{\text{Tr}(\*W^{\top}{\*R}_W\*W)} 
    = \frac{\<{\*R_B} + \*{\Sigma}_B - \*{\Sigma}_B, \*W\*W^{\top}\>}{\<{\*R}_W + \*{\Sigma}_W - \*{\Sigma}_W, \*W\*W^{\top}\>}\\
    \leq \frac{\<{\*{\Sigma}}_B, \*W\*W^{\top}\> + \norm{{\*R}_T - \*{\Sigma}_T} + \norm{{\*R}_W - \*{\Sigma}_W}}{\<{\*{\Sigma}}_W, \*W\*W^{\top}\> - \norm{{\*R}_W - \*{\Sigma}_W}}
\end{align*}
where $\*{\Sigma}_b$ and $\*{\Sigma}_w$ are the second moment matrices for the the between-view and within-view covariances respectively. This can be written using cyclical properties of trace function and relation between spectral norm and trace.
Additionally note from the previous result that we can use total covariance to simplify the estimation of $\*R_B$. That is, $\*R_B = \*R_T-\*R_W$. The rest follows through triangular inequalities.

Observe that the ratio $\<{\*{\Sigma}}_B, \*W\*W^{\top}\>/\<{\*{\Sigma}}_W, \*W\*W^{\top}\>$ is the optimal $\rho^*$ we are interested to bound the approximation $\rho_m$ from. We can show that $\abs*{\rho}\leq 1$. Additionally the two trace terms are sum of normalized eigen values (each bounded above by 1). Thus $\<{\*{\Sigma}}_B, \*W\*W^{\top}\>\in[1,d]$ and $\<{\*{\Sigma}}_W, \*W\*W^{\top}\>\in[1,d]$. Furthermore, from lemma~\ref{lemma:rw}, we know that the norm term with $\*R_W$ is greater than 1 i.e., $\norm{\*R_T - \*\Sigma_W}\leq C\frac{d}{m} > 1$, because we always choose the embedding size to be greater than the number of views subsampled. With these inequalities. We can loosely bound the above inequality for $\rho_m$ as:

\begin{align*}
    \rho_m &\leq \frac{\<{\*{\Sigma}}_B, \*W\*W^{\top}\>}{\<{\*{\Sigma}}_W, \*W\*W^{\top}\>}\frac{ \norm{{\*R}_T - \*{\Sigma}_T} + \norm{{\*R}_W - \*{\Sigma}_W}}{ \norm{{\*R}_W - \*{\Sigma}_W}}\\
    &\leq \rho^{*}\frac{ \norm{{\*R}_T - \*{\Sigma}_T} }{ \norm{{\*R}_W - \*{\Sigma}_W}} \leq \rho^*\frac{2Nm}{NC\frac{(\sqrt{d}+t)^2}{m}} &&\text{[From Lemmas \ref{lemma:rw} and \ref{sec-lemma:rt}]}\\
    &\leq C'\rho^*\frac{m^2}{d} \approx \mathcal{O}(\frac{m^2}{d}) \\
\end{align*}
where $C'$ is a constant term that depends only the subgaussian norm of the $d$-dimensional feature vectors.
\end{proof}

\newpage
\bibliography{references}
\bibliographystyle{icml2020}
\end{document}